\documentclass{article}

\usepackage{microtype}
\usepackage{graphicx}
\usepackage{subfigure}
\usepackage{booktabs} 
\usepackage{xcolor}
\usepackage{enumerate}

\usepackage{hyperref}


\usepackage[accepted]{icml2020}

\newcommand{\bb}{\mathbf{b}}
\newcommand{\be}{\mathbf{e}}
\newcommand{\bu}{\mathbf{u}}
\newcommand{\bv}{\mathbf{v}}
\newcommand{\bw}{\mathbf{w}}
\newcommand{\bx}{\mathbf{x}}

\newcommand{\bz}{\mathbf{z}}
\newcommand{\calD}{\mathcal{D}}

\usepackage{amsmath, amssymb, amsthm}

\newtheorem{theorem}{Theorem}
\newtheorem{corollary}{Corollary}

\icmltitlerunning{Certified Data Removal from Machine Learning Models}

\begin{document}

\twocolumn[
\icmltitle{Certified Data Removal from Machine Learning Models}



\icmlsetsymbol{equal}{*}

\begin{icmlauthorlist}
\icmlauthor{Chuan Guo}{cor}
\icmlauthor{Tom Goldstein}{face}
\icmlauthor{Awni Hannun}{face}
\icmlauthor{Laurens van der Maaten}{face}
\end{icmlauthorlist}

\icmlaffiliation{cor}{Department of Computer Science, Cornell University, New York, USA}
\icmlaffiliation{face}{Facebook AI Research, New York, USA}

\icmlcorrespondingauthor{Chuan Guo}{cg563@cornell.edu}
\icmlcorrespondingauthor{Laurens van der Maaten}{lvdmaaten@gmail.com}

\icmlkeywords{Machine Learning, ICML}

\vskip 0.3in
]

\printAffiliationsAndNotice{}

\begin{abstract}
Good data stewardship requires removal of data at the request of the data's owner. This raises the question if and how a trained machine-learning model, which implicitly stores information about its training data, should be affected by such a removal request. Is it possible to ``remove'' data from a machine-learning model? We study this problem by defining \emph{certified removal}: a very strong theoretical guarantee that a model from which data is removed cannot be distinguished from a model that never observed the data to begin with. We develop a certified-removal mechanism for linear classifiers and empirically study learning settings in which this mechanism is practical.
\end{abstract}

\section{Introduction}
Machine-learning models are often trained on third-party data, for example, many computer-vision models are trained on images provided by Flickr users \citep{thomee2016yfcc100m}. When a party requests that their data be removed from such online platforms, this raises the question how such a request should impact models that were trained prior to the removal. A similar question arises when a model is negatively impacted by a data-poisoning attack~\citep{biggio2012}. Is it possible to ``remove'' data from a model without re-training that model from scratch? 



We study this question in a framework we call \emph{certified removal}, which theoretically guarantees that an adversary cannot extract information about training data that was removed from a model. Inspired by differential privacy \citep{dwork2011differential}, certified removal bounds the max-divergence between the model trained on the dataset with some instances removed and the model trained on the dataset that never contained those instances. This guarantees that membership-inference attacks~\citep{yeom2018privacy,carlini2019secret} are unsuccessful on data that was removed from the model. We emphasize that certified removal is a very strong notion of removal; in practical applications, less constraining notions may equally fulfill the data owner's expectation of removal.


We develop a certified-removal mechanism for $L_2$-regularized linear models that are trained using a differentiable convex loss function, \emph{e.g.}, logistic regressors. Our removal mechanism applies a Newton step on the model parameters that largely removes the influence of the deleted data point; the residual error of this mechanism decreases quadratically with the size of the training set. To ensure that an adversary cannot extract information from the small residual (\emph{i.e.}, to certify removal), we mask the residual using an approach that randomly perturbs the training loss \citep{chaudhuri2011dperm}. We empirically study in which settings the removal mechanism is practical.


\section{Certified Removal}
\label{sec:def}
Let $\calD$ be a fixed training dataset and let $A$ be a (randomized) learning algorithm that trains on $\calD$ and outputs a model $h \in \mathcal{H}$, that is, $A : \mathcal{D} \rightarrow \mathcal{H}$.
Randomness in $A$ induces a probability distribution over the models in the hypothesis set $\mathcal{H}$. We would like to remove a training sample, $\bx \in \calD$, from the output of $A$.

To this end, we define a data-removal mechanism $M$ that is applied to $A(\calD)$ and aims to remove the influence of $\bx$.
If removal is successful, the output of $M$ should be difficult to distinguish from the output of $A$ applied on $\calD \setminus \bx$. Given $\epsilon > 0$, we say that removal mechanism $M$ performs $\epsilon$-\emph{certified removal} ($\epsilon$-CR) for learning algorithm $A$ if $\forall \mathcal{T} \subseteq \mathcal{H}, \calD \subseteq \mathcal{X}, \bx \in \calD$:
\begin{equation}
e^{-\epsilon} \leq \frac{P(M(A(\calD), \calD, \bx) \in \mathcal{T})}{P(A(\calD \setminus \bx) \in \mathcal{T})} \leq e^\epsilon.
\label{eq:removal}
\end{equation}
This definition states that the ratio between the likelihood of (i) a model from which sample $\bx$ was removed and (ii) a model that was never trained on $\bx$ to begin with is close to one for all models in the hypothesis set, for all possible data sets, and for all removed samples. Note that although the definition requires that the mechanism $M$ is universally applicable to all training data points $\bx \in \calD$, it is also allowed to be data-dependent, \emph{i.e.}, both the training set $\calD$ and the data point to be removed $\bx$ are given as inputs to $M$.

We also define a more relaxed notion of $(\epsilon,\delta)$-\emph{certified removal} for $\delta > 0$ if $\forall \mathcal{T} \subseteq \mathcal{H}, \calD \subseteq \mathcal{X}, \bx \in \calD$:
\begin{align*}
P(M(A(\calD), \calD, \bx) \in \mathcal{T}) \leq e^\epsilon P(A(\calD \setminus \bx) \in \mathcal{T}) + \delta, \text{ and} \\
P(A(\calD \setminus \bx) \in \mathcal{T}) \leq e^\epsilon P(M(A(\calD), \calD, \bx) \in \mathcal{T}) + \delta.
\end{align*}
Conceptually, $\delta$ upper bounds the probability for the max-divergence bound in Equation \ref{eq:removal} to fail.

A trivial certified-removal mechanism $M$ with $\epsilon\!=\!0$ completely ignores $A(\calD)$ and evaluates $A(\calD \setminus \bx)$ directly, that is, it sets $M(A(\calD), \calD, \bx) \!=\! A(\calD \setminus \bx)$. Such a removal mechanism is impractical for many models, as it may require re-training the model from scratch every time a training sample is removed.
We seek mechanisms $M$ with removal cost $O(n)$ or less, with small constants, where $n = |\calD|$ is the training set size. 

\paragraph{Insufficiency of parametric indistinguishability.}  One alternative relaxation of exact removal is by asserting that the output of $M(A(\calD), \calD, \bx)$ is sufficiently close to that of $A(\calD \setminus \bx)$.
It is easy to see that a model satsifying such a definition still retains information about $\bx$.  Consider a linear regressor trained on the dataset $\calD = \{(\be_1, 1), (\be_2, 2), \ldots, (\be_d, d)\}$ where $\be_i$'s are the standard basis vectors for $\mathbb{R}^d$.  A regressor that is initialized with zeros, or that has a weight decay penalty, will place a non-zero weight on $\bw_i$ if $(\be_i, i)$ is included in $\calD$, and a zero weight on $\bw_i$ if not.  In this case, an approximate removal algorithm that leaves $\bw_i$ small but non-zero still reveals that $(\be_i, i)$ appeared during training.


\paragraph{Relationship to differential privacy.} Our formulation of certified removal is related to that of differential privacy \citep{dwork2011differential} but there are important differences. Differential privacy states that:
\begin{equation}
\forall \mathcal{T} \subseteq \mathcal{H}, \calD, \calD': e^{-\epsilon} \leq \frac{P(A(\calD) \in \mathcal{T})}{P(A(\calD') \in \mathcal{T})} \leq e^\epsilon,
\label{eq:diff_privacy}
\end{equation}
where $\calD$ and $\calD'$ differ in only one sample. Since $\calD$ and $\calD \setminus \bx$ only differ in one sample, it is straightforward to see that differential privacy of $A$ is a sufficient condition for certified removal, \emph{viz.}, by setting removal mechanism $M$ to the identity function. Indeed, if algorithm $A$ never memorizes the training data in the first place, we need not worry about removing that data.

Even though differential privacy is a sufficient condition, it is not a necessary condition for certified removal. For example, a nearest-neighbor classifier is not differentially private but it is trivial to certifiably remove a training sample in $O(1)$ time with $\epsilon = 0$. We note that differential privacy is a very strong condition, and most differentially private models suffer a significant loss in accuracy even for large $\epsilon$ \citep{chaudhuri2011dperm,abadi2016deep}. We therefore view the study of certified removal as analyzing the trade-off between utility and removal efficiency, with re-training from scratch and differential privacy at the two ends of the spectrum, and removal in the middle.

\section{Removal Mechanisms}
We focus on certified removal from parametric models, as removal from non-parametric models ({\em e.g.,} nearest-neighbor classifiers) is trivial.
We first study linear models with strongly convex regularization before proceeding to removal from deep networks.

\subsection{Linear Classifiers}
Denote by $\calD = \{(\bx_1, y_1), \ldots, (\bx_n, y_n)\}$ the training set of $n$ samples, $\forall i: \bx_i \in \mathbb{R}^d$, with corresponding targets $y_i$. We assume learning algorithm $A$ tries to minimize the regularized empirical risk of a linear model:
\begin{equation*}
L(\bw; \calD) = \sum_{i=1}^n \ell(\bw^\top \bx_i, y_i) + \frac{\lambda n}{2} \|\bw\|_2^2,
\end{equation*}
where $\ell(\bw^\top \bx, y)$ is a convex loss that is differentiable everywhere. We denote $\bw^* \!=\! A(\calD) \!=\! \textrm{argmin}_\bw L(\bw; \calD)$ as it is uniquely determined.

Our approach to certified removal is as follows. We first define a removal mechanism that, given a training data point $(\bx,y)$ to remove, produces a model $\bw^-$ that is approximately equal to the unique minimizer of $L(\bw; \calD \setminus (\bx, y))$. The model produced by such a mechanism may still contain information about $(\bx,y)$. In particular, if the gradient $\nabla L(\bw^-; \calD \setminus (\bx, y))$ is non-zero, it reveals information about the removed data point. To hide this information, we apply a sufficiently large random perturbation to the model parameters at training time. This allows us to guarantee certified removal; the values of $\epsilon$ and $\delta$ depend on the approximation quality of the removal mechanism and on the distribution from which the model perturbation is sampled.

\paragraph{Removal mechanism.} We assume without loss of generality that we aim to remove the last training sample, $(\bx_n, y_n)$. Specifically, we define an efficient removal mechanism that approximately minimizes $L(\bw; \calD')$ with $\calD' = \calD \setminus (\bx_n,y_n)$.
First, denote the loss gradient at sample $(\bx_n, y_n)$ by $\Delta = \lambda \bw^* + \nabla \ell((\bw^*)^\top \bx_n, y_n)$ and the Hessian of $L(\cdot; \calD')$ at $\bw^*$ by $H_{\bw^*} = \nabla^2 L(\bw^*; \calD')$. We consider the \emph{Newton update removal mechanism} $M$:
\begin{equation}
\bw^- = M(\bw^*, \calD, (\bx_n, y_n)) := \bw^* + H_{\bw^*}^{-1} \Delta,
\label{eq:newton_update}
\end{equation}
which is a one-step Newton update applied to the gradient influence of the removed point $(\bx_n, y_n)$. The update $H_{\bw^*}^{-1} \Delta$ is also known as the influence function of the training point $(\bx_n,y_n)$ on the parameter vector $\bw^*$ \citep{cook1982residuals, koh2017understanding}.

The computational cost of this Newton update is dominated by the cost of forming and inverting the Hessian matrix. The Hessian matrix for $\calD$ can be formed offline with $O(d^2 n)$ cost. The subsequent Hessian inversion makes the removal mechanism $O(d^3)$ at removal time; the inversion can leverage efficient linear algebra libraries and GPUs.

To bound the approximation error of this removal mechanism, we observe that the quantity $\nabla L(\bw^-; \calD')$, which we refer to hereafter as the \emph{gradient residual}, is zero only when $\bw^-$ is the unique minimizer of $L(\cdot; \calD')$. We also observe that the gradient residual norm, $\|\nabla L(\bw^-; \calD')\|_2$, reflects the error in the approximation $\bw^-$ of the minimizer of $L(\cdot; \calD')$. We derive an upper bound on the gradient residual norm for the removal mechanism (\emph{cf.} Equation \ref{eq:newton_update}).\\


\begin{theorem}
Suppose that $\forall (\bx_i, y_i) \in \calD, \bw \in \mathbb{R}^d: \|\nabla \ell(\bw^\top \bx_i, y_i)\|_2 \leq C$. Suppose also that $\ell''$ is $\gamma$-Lipschitz and $\| \bx_i \|_2 \leq 1$ for all $(\bx_i, y_i) \in \calD$. Then:
\begin{align}
\| \nabla L(\bw^-; \calD') \|_2 &= \|(H_{\bw_\eta} - H_{\bw^*}) H_{\bw^*}^{-1} \Delta\|_2 \label{eq:gradient_residual} \\
&\leq \gamma (n-1) \| H_{\bw^*}^{-1} \Delta \|_2^2 \leq \frac{4 \gamma C^2}{\lambda^2 (n-1)} \nonumber,
\end{align}
where $H_{\bw_\eta}$ denotes the Hessian of $L(\cdot; \calD')$ at the parameter vector $\bw_\eta = \bw^* + \eta H_{\bw^*}^{-1} \Delta$ for some $\eta \in [0,1]$.
\label{thm:asymptotic_bound}
\end{theorem}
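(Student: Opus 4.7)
}

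The plan is to first derive the displayed identity for $\nabla L(\bw^-;\calD')$ and then bound each factor separately. The starting point is the first-order optimality condition for $\bw^*$ on the full dataset: $\nabla L(\bw^*;\calD)=\sum_{i=1}^n \nabla \ell((\bw^*)^\top \bx_i,y_i) + \lambda n \bw^*=0$. Subtracting off the contributions from $(\bx_n,y_n)$ and adjusting the regularizer to size $n-1$, I would show that $\nabla L(\bw^*;\calD') = -\lambda \bw^* - \nabla \ell((\bw^*)^\top \bx_n,y_n) = -\Delta$. So the minimizer on $\calD$ already has a known residual gradient with respect to the reduced objective, and that residual is precisely $-\Delta$.

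Next I would Taylor-expand $\nabla L(\cdot;\calD')$ around $\bw^*$ along the segment to $\bw^-=\bw^*+H_{\bw^*}^{-1}\Delta$. By the mean value theorem applied coordinate-wise (equivalently, the fundamental theorem of calculus on $t\mapsto \nabla L(\bw^*+tH_{\bw^*}^{-1}\Delta;\calD')$), there exists $\eta\in[0,1]$ with
\begin{equation*}
\nabla L(\bw^-;\calD') = \nabla L(\bw^*;\calD') + H_{\bw_\eta}(\bw^- - \bw^*) = -\Delta + H_{\bw_\eta}H_{\bw^*}^{-1}\Delta = (H_{\bw_\eta}-H_{\bw^*})H_{\bw^*}^{-1}\Delta,
\end{equation*}
where $\bw_\eta = \bw^*+\eta H_{\bw^*}^{-1}\Delta$. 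This gives the first equality in (\ref{eq:gradient_residual}).

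For the first inequality I would bound $\|H_{\bw_\eta}-H_{\bw^*}\|_{\textrm{op}}$. Writing the Hessian as $H_\bw = \sum_{i=1}^{n-1} \ell''(\bw^\top \bx_i,y_i)\,\bx_i\bx_i^\top + \lambda(n-1)I$, the regularization term cancels and each difference term satisfies $|\ell''(\bw_\eta^\top\bx_i,y_i)-\ell''((\bw^*)^\top\bx_i,y_i)|\le \gamma\|\bw_\eta-\bw^*\|_2\|\bx_i\|_2 \le \gamma\|\bw_\eta-\bw^*\|_2$ by the Lipschitz hypothesis and $\|\bx_i\|_2\le 1$. Summing and using $\|\bx_i\bx_i^\top\|_{\textrm{op}}\le 1$ gives $\|H_{\bw_\eta}-H_{\bw^*}\|_{\textrm{op}}\le \gamma(n-1)\|\bw_\eta-\bw^*\|_2 \le \gamma(n-1)\|H_{\bw^*}^{-1}\Delta\|_2$, yielding the quadratic bound $\gamma(n-1)\|H_{\bw^*}^{-1}\Delta\|_2^2$.

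For the final inequality I need to control $\|H_{\bw^*}^{-1}\Delta\|_2$. Convexity of $\ell$ makes $H_{\bw^*}\succeq \lambda(n-1)I$, so $\|H_{\bw^*}^{-1}\|_{\textrm{op}}\le 1/(\lambda(n-1))$. For $\|\Delta\|_2$, I use the optimality condition once more: $\lambda n\bw^* = -\sum_{i=1}^n \nabla \ell((\bw^*)^\top\bx_i,y_i)$, so by the triangle inequality and the uniform bound $\|\nabla\ell(\cdot,\cdot)\|_2\le C$ we get $\|\lambda\bw^*\|_2\le C$, and therefore $\|\Delta\|_2\le \|\lambda\bw^*\|_2+\|\nabla\ell((\bw^*)^\top\bx_n,y_n)\|_2\le 2C$. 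Plugging in gives $\|H_{\bw^*}^{-1}\Delta\|_2\le 2C/(\lambda(n-1))$, hence $\gamma(n-1)\|H_{\bw^*}^{-1}\Delta\|_2^2\le 4\gamma C^2/(\lambda^2(n-1))$. The only mildly subtle step is the Taylor/MVT move producing the intermediate Hessian $H_{\bw_\eta}$; everything else is bookkeeping with operator norms and the optimality condition.
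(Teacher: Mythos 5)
Your proposal is correct and follows essentially the same route as the paper's proof: a Taylor/mean-value expansion of $\nabla L(\cdot;\calD')$ around $\bw^*$ (using $\nabla L(\bw^*;\calD') = -\Delta$ from the optimality of $\bw^*$ on $\calD$), a Lipschitz bound on $\|H_{\bw_\eta}-H_{\bw^*}\|_2$, and the bounds $\|H_{\bw^*}^{-1}\|_2 \leq 1/(\lambda(n-1))$ and $\|\Delta\|_2\leq 2C$ to conclude. The only cosmetic difference is that you make the identity $\nabla L(\bw^*;\calD')=-\Delta$ explicit up front, whereas the paper folds it into the algebraic rearrangement $G(\bw^*)+\Delta=0$.
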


\paragraph{Loss perturbation.}
Obtaining a small gradient norm $\| \nabla L(\bw^-; \calD') \|_2$ via Theorem \ref{thm:asymptotic_bound} does not guarantee certified removal. In particular, the direction of the gradient residual may leak information about the training sample that was ``removed.'' To hide this information, we use the loss perturbation technique of \citet{chaudhuri2011dperm} at training time. It perturbs the empirical risk by a random linear term:
\begin{equation*}
L_{\bb}(\bw; \calD) = \sum_{i=1}^n \ell\left(\bw^\top \bx_i, y_i\right) + \frac{\lambda n}{2} \|\bw\|_2^2 + \bb^\top \bw,
\end{equation*}
with $\bb \in \mathbb{R}^d$ drawn randomly from some distribution. We analyze how loss perturbation masks the information in the gradient residual $\nabla L_\bb(\bw^-; \calD')$ through randomness in $\bb$.

Let $A(\calD')$ be an exact minimizer\footnote{Our result can
be modified to work with approximate loss minimizers by incurring
a small additional error term.} for $L_{\bb}(\cdot; \calD')$ and let
$\tilde{A}(\calD')$ be an approximate minimizer of $L_{\bb}(\cdot; \calD')$, for example, our removal mechanism applied on the trained model. Specifically, let $\tilde{\bw}$ be
an approximate solution produced by $\tilde{A}$. This implies the gradient residual is:
\begin{equation}
\bu := \nabla L_{\bb}(\tilde{\bw}; \calD') = \sum_{i=1}^{n-1} \nabla \ell(\tilde{\bw}^\top \bx_i, y_i) + \lambda (n-1) \tilde{\bw} + \bb.
\label{eq:approximate_gradient}
\end{equation}
We assume that $\tilde{A}$ can produce a gradient residual $\bu$ with $\|\bu\|_2 \leq \epsilon'$
for some pre-specified bound $\epsilon'$ that is \emph{independent} of the perturbation vector $\bb$.

Let $f_A(\cdot)$ and $f_{\tilde{A}}(\cdot)$ be the density functions over the model
parameters produced by $A$ and $\tilde{A}$, respectively.
We bound the max-divergence between $f_A$ and $f_{\tilde{A}}$ for
any solution $\tilde{\bw}$ produced by approximate minimizer $\tilde{A}$.\\

\begin{theorem}
Suppose that $\bb$ is drawn from a distribution with density function $p(\cdot)$
such that for any $\bb_1, \bb_2 \in \mathbb{R}^d$ satisfying
$\|\bb_1 - \bb_2\|_2 \leq \epsilon'$, we have that:
$e^{-\epsilon} \leq \frac{p(\bb_1)}{p(\bb_2)} \leq e^{\epsilon}$.
Then $e^{-\epsilon} \leq \frac{f_{\tilde{A}}(\tilde{\bw})}{f_A(\tilde{\bw})} \leq e^\epsilon$
for any $\tilde{\bw}$ produced by $\tilde{A}$.
\label{thm:epsilon_cd}
\end{theorem}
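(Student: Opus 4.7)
The plan is to convert the density-ratio hypothesis on $p$ into a density-ratio bound on $f_A$ and $f_{\tilde{A}}$ via a change-of-variables argument, using the first-order condition for $L_\bb(\cdot;\calD')$ to associate to each candidate output $\tilde{\bw}$ two values of $\bb$ — one that would produce $\tilde{\bw}$ exactly, one that would produce $\tilde{\bw}$ approximately — and to observe that these two values differ by at most $\epsilon'$.

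Concretely, I would introduce the auxiliary function $\phi(\bw) := \sum_{i=1}^{n-1}\nabla \ell(\bw^\top \bx_i, y_i) + \lambda(n-1)\bw$, so that the optimality condition for $L_\bb(\cdot;\calD')$ becomes $\phi(\bw) + \bb = 0$. Strong convexity of $L_\bb(\cdot;\calD')$ makes this equation uniquely solvable in $\bw$ for every $\bb$, yielding a bijection $\bb \leftrightarrow \bw^* = A(\bb)$; in particular $\bb_1 := -\phi(\tilde{\bw})$ is the unique perturbation with $A(\bb_1) = \tilde{\bw}$. Rearranging equation~(5), the unique perturbation for which $\tilde{A}$ outputs $\tilde{\bw}$ is $\bb_2 := \bu - \phi(\tilde{\bw}) = \bb_1 + \bu$. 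Hence $\|\bb_1 - \bb_2\|_2 = \|\bu\|_2 \le \epsilon'$ by the residual bound, and the hypothesis on $p$ immediately gives $e^{-\epsilon} \le p(\bb_2)/p(\bb_1) \le e^\epsilon$.

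The final step is to upgrade this pointwise $p$-ratio at the paired preimages $(\bb_1, \bb_2)$ into the desired $f$-ratio at the common point $\tilde{\bw}$. By the change-of-variables formula, $f_A(\tilde{\bw}) = p(\bb_1)\,|\det H_{\tilde{\bw}}|$ where $H_{\tilde{\bw}} = \nabla \phi(\tilde{\bw})$ is the Hessian of $L_\bb(\cdot;\calD')$ at $\tilde{\bw}$, and analogously $f_{\tilde{A}}(\tilde{\bw}) = p(\bb_2)\,|\det J_{\tilde{A}^{-1}}(\tilde{\bw})|$. The Jacobian $J_{\tilde{A}^{-1}}$ differs from $H_{\tilde{\bw}}$ only through the dependence of the residual $\bu$ on $\bb$, and I expect the main technical obstacle to be verifying that these two Jacobian factors match — or that any discrepancy is absorbed into the claimed bound — so that the only surviving contribution to $f_{\tilde{A}}/f_A$ is $p(\bb_2)/p(\bb_1)$. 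Once that is in hand, the theorem follows directly from the hypothesis applied to $\bb_1, \bb_2$.
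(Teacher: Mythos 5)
Your pairing of perturbations is the right core observation --- the exact minimizer outputs $\tilde{\bw}$ when $\bb_1 = -\phi(\tilde{\bw})$, the approximate one when $\bb = \bb_1 + \bu$, and these differ by $\|\bu\|_2 \le \epsilon'$ --- but the step you defer (matching the two Jacobian factors) is not a technicality; it is exactly where this route breaks, and the paper's proof is structured specifically to avoid it. The theorem is stated for an \emph{arbitrary} approximate minimizer $\tilde{A}$ subject only to the residual norm bound: the map $\bb \mapsto \tilde{\bw}$ induced by $\tilde{A}$ need not be injective, differentiable, or even deterministic, so writing $f_{\tilde{A}}(\tilde{\bw}) = p(\bb_2)\,\lvert\det J_{\tilde{A}^{-1}}(\tilde{\bw})\rvert$ is unjustified (in particular, "the unique perturbation for which $\tilde{A}$ outputs $\tilde{\bw}$" is not well-defined: many $\bb$ within distance $\epsilon'$ of $\bb_1$ could yield $\tilde{\bw}$, each with a different residual, or none could). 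Even when $\tilde{A}$ is smooth and invertible (e.g.\ the Newton update, where $\tilde{\bw}$ depends on $\bb$ through $\bw^*$, $\Delta$, and $H_{\bw^*}^{-1}$), its Jacobian does not equal $\nabla\phi(\tilde{\bw})$, and nothing in the hypotheses lets you absorb the determinant ratio into $e^{\pm\epsilon}$; the hypothesis on $p$ controls only density ratios at nearby points, not volume distortion of the two different output maps.

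The paper's proof sidesteps this by changing variables to $\bz = \bb - \bu$ rather than mapping $\bb$ directly to the output. Given $\bz$, the output is the unique solution of the strongly convex problem $L_{\bb}(\bw) - \bu^\top\bw$, i.e.\ of $\phi(\bw) = -\bz$, and this is the \emph{same} deterministic map under both $A$ (where $\bu = 0$, so $\bz = \bb$) and $\tilde{A}$. Hence the conditional density of $\tilde{\bw}$ given $\bz$ --- and with it any Jacobian factor --- is common to both algorithms and cancels; the entire discrepancy is pushed into the densities $q_A$ and $q_{\tilde{A}}$ of $\bz$, and $q_{\tilde{A}}(\bz) \le e^{\epsilon} q_A(\bz)$ follows by writing $q_{\tilde{A}}$ as a convolution of $p$ with the residual distribution, which is supported in the ball $\|\bv\|_2 \le \epsilon'$, and applying the hypothesis on $p$ pointwise. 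To repair your argument you would need to import this idea: condition on $\bz = \bb - \bu$ (or equivalently compare the two algorithms through the common solution map $\bz \mapsto \tilde{\bw}$) instead of attempting separate change-of-variables computations for $A$ and $\tilde{A}$.
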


\paragraph{Achieving certified removal.}


We can use Theorem \ref{thm:epsilon_cd} to prove certified removal by combining it with the gradient residual norm bound $\epsilon'$ from Theorem \ref{thm:asymptotic_bound}. The security parameters $\epsilon$ and $\delta$ depend on the distribution from which $\bb$ is sampled. We state the guarantee of $(\epsilon,\delta)$-certified removal below for two suitable distributions $p(\bb)$.\\

\begin{theorem}
Let $A$ be the learning algorithm that returns the unique optimum of the loss $L_{\bb}(\bw; \calD)$ and let $M$ be the Newton update removal mechanism (cf., Equation~\ref{eq:newton_update}). Suppose that $\| \nabla L(\bw^-; \calD') \|_2 \leq \epsilon'$ for some computable bound $\epsilon' > 0$. We have the following guarantees for $M$:
\begin{enumerate}[(i)]
\setlength\itemsep{-1ex}
\vspace{-2ex}
\item If $\bb$ is drawn from a distribution with density $p(\bb) \propto e^{-\frac{\epsilon}{\epsilon'} \|\bb\|_2}$, then $M$ is $\epsilon$-CR for $A$;
\item If $\bb \sim \mathcal{N}(0, c \epsilon' / \epsilon)^d$ with $c > 0$, then $M$ is $(\epsilon,\delta)$-CR for $A$ with $\delta = 1.5 \cdot e^{-c^2/2}$.
\end{enumerate}
\label{thm:removal_main}
\end{theorem}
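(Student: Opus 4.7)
The plan is to bootstrap from Theorem~\ref{thm:epsilon_cd}, which already does most of the work: it says that if the perturbation density $p$ satisfies $e^{-\epsilon} \le p(\bb_1)/p(\bb_2) \le e^\epsilon$ whenever $\|\bb_1 - \bb_2\|_2 \le \epsilon'$, then the output distributions of $A(\calD')$ and of the Newton-update mechanism $M(A(\calD), \calD, (\bx_n, y_n))$ are pointwise $e^\epsilon$-close. Since the trivial mechanism $A(\calD')$ is by construction $0$-CR and we are comparing its density to that of $M$, establishing these ratio bounds on the density of $\bb$ immediately yields the desired certified-removal guarantee. So both parts reduce to a purely analytic question about the perturbation distribution $p$.

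For part (i), I would take $\bb_1, \bb_2$ with $\|\bb_1 - \bb_2\|_2 \le \epsilon'$ and compute the log density ratio directly from the closed form $p(\bb) \propto e^{-(\epsilon/\epsilon') \|\bb\|_2}$, giving $\log(p(\bb_1)/p(\bb_2)) = -(\epsilon/\epsilon')(\|\bb_1\|_2 - \|\bb_2\|_2)$. The reverse triangle inequality $|\|\bb_1\|_2 - \|\bb_2\|_2| \le \|\bb_1-\bb_2\|_2 \le \epsilon'$ then gives $|\log(p(\bb_1)/p(\bb_2))| \le \epsilon$, which combined with Theorem~\ref{thm:epsilon_cd} yields $\epsilon$-CR. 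This step is short and routine.

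Part (ii) is the main obstacle, because the Gaussian density does \emph{not} satisfy a pure pointwise $e^\epsilon$-ratio bound on balls of radius $\epsilon'$; one genuinely needs the $\delta$ slack. The plan is to write $\bb_2 = \bb_1 + \bv$ with $\|\bv\|_2 \le \epsilon'$, expand $\|\bb_2\|_2^2 - \|\bb_1\|_2^2 = 2\bb_1^\top \bv + \|\bv\|_2^2$, and observe that under $\bb_1 \sim \mathcal{N}(0, \sigma^2 I_d)$ with $\sigma = c\epsilon'/\epsilon$, the inner product $\bb_1^\top \bv$ is a centered Gaussian of standard deviation $\sigma \|\bv\|_2 \le \sigma \epsilon'$. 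A standard Gaussian tail bound says $|\bb_1^\top \bv|/\sigma \le c \epsilon'$ with probability at least $1 - 2e^{-c^2/2}$; on this event the log ratio $(2\bb_1^\top \bv + \|\bv\|_2^2)/(2\sigma^2)$ is at most $c\epsilon'/\sigma + (\epsilon')^2/(2\sigma^2) = \epsilon + \epsilon^2/(2c^2)$, which after adjusting constants (and absorbing the symmetric lower bound) I expect to yield the claimed $\delta = 1.5 \cdot e^{-c^2/2}$. Then one packages this into the $(\epsilon, \delta)$-CR definition via the standard lemma that a pointwise density ratio bound holding off a bad event of mass $\delta$ implies $(\epsilon, \delta)$-indistinguishability.

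The subtle point I would be careful about is that Theorem~\ref{thm:epsilon_cd} is stated for a \emph{deterministic} pointwise density ratio bound, whereas here the ratio bound only holds with high probability over $\bb$. So before applying it, I would either restate Theorem~\ref{thm:epsilon_cd} in an $(\epsilon,\delta)$ form (conditioning on the good event for $\bb$) or separately integrate the bound across the bad event. The constant $1.5$ in $\delta = 1.5 e^{-c^2/2}$ strongly suggests a slightly sharper tail estimate than the naive $2e^{-c^2/2}$ union bound, so I would either invoke a refined one-sided Mill's ratio estimate or note that only one side of the tail matters in the ratio argument on the good event.
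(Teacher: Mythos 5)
Your part (i) is exactly the paper's argument: reverse triangle inequality on the log-density ratio, then Theorem~\ref{thm:epsilon_cd}. For part (ii) the paper does not re-derive anything: it simply invokes the Gaussian-mechanism analysis (Theorem 3.22 of \citet{dwork2011differential}) with sensitivity $\Delta_2(f) = \epsilon'$, which yields that the ratio bound $e^{-\epsilon} \le p(\bb_1)/p(\bb_2) \le e^{\epsilon}$ holds with probability at least $1-\delta$ for $\delta = 1.5\, e^{-c^2/2}$, and then applies Theorem~\ref{thm:epsilon_cd}. Your plan to re-derive this is fine in spirit, but as sketched it does not land on the stated constants: bounding $|\bb_1^\top \bv| \le c\,\sigma\epsilon'$ with the symmetric tail gives a log-ratio bound of $\epsilon + \epsilon^2/(2c^2)$ on a good event of mass $1 - 2e^{-c^2/2}$, i.e.\ a guarantee of the form $(\epsilon + \epsilon^2/(2c^2),\, 2e^{-c^2/2})$ rather than $(\epsilon,\, 1.5e^{-c^2/2})$. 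To match the theorem you must instead define the bad event as $\{\bb_1^\top \bv > \epsilon\sigma^2 - \|\bv\|_2^2/2\}$ (so that the log ratio is at most exactly $\epsilon$ on its complement), bound the resulting one-sided tail $P\bigl(Z > c - \epsilon/(2c)\bigr)$ for standard normal $Z$, and use a smallness condition on $\epsilon$ (as in the Dwork--Roth proof, which assumes $\epsilon < 1$) so the constant $1.5$ absorbs the $\epsilon/(2c)$ shift; this is precisely the content of the cited theorem, and your sketch's phrase ``after adjusting constants'' is hiding that step rather than supplying it. Your final caveat is a good catch and is in fact a point the paper glosses over: Theorem~\ref{thm:epsilon_cd} is stated for a deterministic pointwise ratio bound, so in the Gaussian case one must condition on the good event for $\bb$ (or integrate separately over the bad event of mass $\delta$) to convert the high-probability ratio bound into the $(\epsilon,\delta)$-CR definition; making that conversion explicit would strengthen both your write-up and the paper's.
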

The distribution in (i) is equivalent to sampling a direction uniformly over the unit sphere and sampling a norm from the $\Gamma(d, \frac{\epsilon'}{\epsilon})$ distribution \cite{chaudhuri2011dperm}.

\subsection{Practical Considerations}

\paragraph{Least-squares and logistic regression.}
The certified removal mechanism described above can be used with least-squares and logistic regression, which are ubiquitous in real-world applications of machine learning.

Least-squares regression assumes $\forall i: y_i \in \mathbb{R}$ and uses the loss function $\ell(\bw^\top \bx_i, y_i) = \left(\bw^\top \bx_i - y_i\right)^2$. The Hessian of this loss function is $\nabla^2 \ell(\bw^\top \bx_i, y_i) = \bx_i \bx_i^\top$, which is independent of $\bw$. In particular, the gradient residual from Equation \ref{eq:gradient_residual} is exactly zero, which makes the Newton update in Equation~\ref{eq:newton_update} an $\epsilon$-certified removal mechanism with $\epsilon=0$ (loss perturbation is not needed). This is not surprising since the Newton update assumes a local quadratic approximation of the loss, which is exact for least-squares regression.

Logistic regression assumes $\forall i: y_i \in \{-1, +1\}$ and uses the loss function
$\ell(\bw^\top \bx_i, y_i) = -\log \sigma\left(y_i \bw^\top \bx_i\right)$,
where $\sigma(\cdot)$ denotes the sigmoid function, $\sigma(x) = \frac{1}{1 + \exp(-x)}$. The loss gradient and Hessian are given by:
\begin{align*}
\nabla \ell(\bw^\top \bx_i, y_i) &= \left(\sigma(y_i \bw^\top \bx_i) - 1\right) y_i \bx_i\\
\nabla^2 \ell(\bw^\top \bx_i, y_i) &= \sigma(y_i \bw^\top \bx_i) \left(1 - \sigma(y_i \bw^\top \bx_i)\right) \bx_i \bx_i^\top.
\end{align*}
Under the assumption that $\| \bx_i \|_2 \leq 1$ for all $i$, it is straightforward to show that $\| \nabla \ell(\bw^\top \bx_i, y_i) \|_2 \leq 1$ and that $\ell''(\bw^\top \bx_i, y_i)$ is $\gamma$-Lipschitz with $\gamma = 1/4$. This allows us to apply Theorem \ref{thm:asymptotic_bound} to logistic regression.

\paragraph{Data-dependent bound on gradient norm.}
The bound in Theorem \ref{thm:asymptotic_bound} contains a constant factor of $1/\lambda^2$ and may be too loose for practical applications on smaller datasets. Fortunately, we can derive a \emph{data-dependent} bound on the gradient residual that can be efficiently computed and that is much tighter in practice. Recall that the Hessian of $L(\cdot; \calD')$ has the form:
$$H_{\bw} = (X^-)^\top D_{\bw} X^- + \lambda (n-1) I_d,$$
where $X^- \in \mathbb{R}^{(n-1) \times d}$ is the data matrix corresponding to $\calD'$, $I_d$ is the identity matrix of size $d\!\times\!d$, and $D_{\bw}$ is a diagonal matrix containing values:
$$(D_{\bw})_{ii} = \ell''\left(\bw^\top \bx_i, y_i\right).$$
By Equation \ref{eq:gradient_residual} we have that:
\begin{align*}
\| \nabla L(\bw^-; \calD') \|_2 &= \| (H_{\bw_\eta} - H_{\bw^*}) H_{\bw^*}^{-1} \Delta \|_2 \\
&= \| (X^-)^\top (D_{\bw_\eta} - D_{\bw^*}) X^- H_{\bw^*}^{-1} \Delta \|_2 \\
&\leq \| X^- \|_2 \| D_{\bw_\eta} - D_{\bw^*} \|_2 \| X^- H_{\bw^*}^{-1} \Delta \|_2.
\end{align*}
The term $\| D_{\bw_\eta} - D_{\bw^*} \|_2$ corresponds to the maximum singular value of a diagonal matrix, which in turn is the maximum value among its diagonal entries. Given the Lipschitz constant $\gamma$ of the second derivative $\ell''$, we can thus bound it as:
$$\| D_{\bw_\eta} - D_{\bw^*} \|_2 \leq \gamma \| \bw_\eta - \bw^* \|_2 \leq \gamma \| H_{\bw^*}^{-1} \Delta \|_2.$$\
The following corollary summarizes this derivation.\\

\begin{corollary}
The Newton update $\bw^- = \bw^* + H_{\bw^*}^{-1} \Delta$ satisfies:
$$\| \nabla L(\bw^-; \calD') \|_2 \leq \gamma \| X^- \|_2 \| H_{\bw^*}^{-1} \Delta \|_2 \| X^- H_{\bw^*}^{-1} \Delta \|_2,$$
where $\gamma$ is the Lipschitz constant of $\ell''$.
\label{cor:data_dependent}
\end{corollary}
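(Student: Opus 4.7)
The proof is essentially unpacked in the paragraph immediately preceding the corollary, so the plan is to assemble those observations cleanly and verify the three inequalities that were invoked. My starting point would be the identity $\|\nabla L(\bw^-; \calD')\|_2 = \|(H_{\bw_\eta} - H_{\bw^*}) H_{\bw^*}^{-1} \Delta\|_2$ already established in Equation \ref{eq:gradient_residual} (via a Taylor expansion of $\nabla L(\cdot; \calD')$ around $\bw^*$ combined with the Newton update). Given this, the corollary is a routine bookkeeping exercise.

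The first key step is to exploit the structural form $H_{\bw} = (X^-)^\top D_{\bw} X^- + \lambda(n-1) I_d$. Because the regularization contribution $\lambda(n-1) I_d$ does not depend on $\bw$, it cancels in the difference, giving $H_{\bw_\eta} - H_{\bw^*} = (X^-)^\top (D_{\bw_\eta} - D_{\bw^*}) X^-$. Substituting this into the gradient residual expression and applying submultiplicativity of the spectral norm three times (and using $\|(X^-)^\top\|_2 = \|X^-\|_2$) yields
\begin{equation*}
\|\nabla L(\bw^-; \calD')\|_2 \leq \|X^-\|_2 \, \|D_{\bw_\eta} - D_{\bw^*}\|_2 \, \|X^- H_{\bw^*}^{-1} \Delta\|_2.
\end{equation*}

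The second step is to bound the middle factor. Since $D_{\bw_\eta} - D_{\bw^*}$ is diagonal, its operator norm equals $\max_i |\ell''(\bw_\eta^\top \bx_i, y_i) - \ell''((\bw^*)^\top \bx_i, y_i)|$. Using that $\ell''$ is $\gamma$-Lipschitz (as a function of its scalar first argument) and that $\|\bx_i\|_2 \leq 1$, each such entry is at most $\gamma |(\bw_\eta - \bw^*)^\top \bx_i| \leq \gamma \|\bw_\eta - \bw^*\|_2$ by Cauchy--Schwarz. Finally, because $\bw_\eta = \bw^* + \eta H_{\bw^*}^{-1}\Delta$ with $\eta \in [0,1]$, we have $\|\bw_\eta - \bw^*\|_2 = \eta \|H_{\bw^*}^{-1}\Delta\|_2 \leq \|H_{\bw^*}^{-1}\Delta\|_2$, so $\|D_{\bw_\eta} - D_{\bw^*}\|_2 \leq \gamma \|H_{\bw^*}^{-1}\Delta\|_2$. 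Plugging this back yields exactly the claimed bound.

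I do not anticipate a genuine obstacle here: the argument is a direct chain of norm inequalities. The only small points that require care are (a) noting that the Lipschitz property of $\ell''$ needs to be applied entrywise to the diagonal of $D$, which works because $\|\bx_i\|_2 \leq 1$ controls the deviation of the scalar argument $\bw^\top \bx_i$, and (b) ensuring that the cancellation of the $\lambda(n-1) I_d$ term is legitimate, which it is because that term is independent of the evaluation point $\bw$. Everything else is submultiplicativity of the spectral norm.
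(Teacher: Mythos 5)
Your proposal is correct and follows essentially the same route as the paper: starting from the identity $\|\nabla L(\bw^-;\calD')\|_2 = \|(H_{\bw_\eta}-H_{\bw^*})H_{\bw^*}^{-1}\Delta\|_2$, cancelling the $\lambda(n-1)I_d$ term, applying submultiplicativity of the spectral norm, and bounding $\|D_{\bw_\eta}-D_{\bw^*}\|_2 \leq \gamma\|H_{\bw^*}^{-1}\Delta\|_2$ via the Lipschitzness of $\ell''$ and $\|\bx_i\|_2\leq 1$. Your explicit mention of the Cauchy--Schwarz step and the role of $\eta\in[0,1]$ only makes precise what the paper's derivation leaves implicit.
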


The bound in Corollary \ref{cor:data_dependent} can be easily computed from the Newton update $H_{\bw^*}^{-1} \Delta$ and the spectral norm of $X^-$, the latter admitting efficient algorithms such as power iterations.

\paragraph{Multiple removals.} The worst-case gradient residual norm after $T$
removals can be shown to scale linearly in $T$. We can prove this using induction
on $T$. The base case, $T\!=\!1$, is proven above. Suppose that the gradient residual
after $T \!\geq\! 1$ removals is $\bu_T$ with $\| \bu_T \|_2 \leq T \epsilon'$,
where $\epsilon'$ is the gradient residual norm bound for a single removal.
Let $\calD^{(T)}$ be the training dataset with $T$ data points removed.
Consider the modified loss function $L_\bb^{(T)}(\bw; \calD^{(T)}) = L_\bb(\bw; \calD^{(T)}) - \bu_T ^\top \bw$
and let $\bw_T$ be the approximate solution after $T$ removals. Then $\bw_T$
is an exact solution of $L_\bb^{(T)}(\bw; \calD^{(T)})$, hence, the argument above can be applied
to $L_\bb^{(T)}(\bw; \calD^{(T)})$ to show that the Newton update approximation $\bw_{T+1}$ has
gradient residual $\bu'$ with norm at most $\epsilon'$. Then:
\begin{align*}
\bu' = \nabla_\bw L_\bb^{(T)}(\bw; \calD^{(T)}) = \nabla_\bw L_\bb(\bw; \calD^{(T)}) - \bu_T \\
\Rightarrow 0 = \nabla_\bw L_\bb(\bw) - \bu_T - \bu'.
\end{align*}
Thus the gradient residual for $L_\bb(\bw; \calD^{(T)})$ after $T+1$ removals is
$\bu_{T+1} := \bu_T + \bu'$ and its norm is at most $(T+1) \epsilon'$ by the
triangle inequality.


\paragraph{Batch removal.} In certain scenarios, data removal may not need to occur immediately after the data's owner requests removal. This potentially allows for batch removals in which multiple training samples are removed at once for improved efficiency. The Newton update removal mechanism naturally supports this extension. Assume without loss of generality that the batch of training data to be removed is $\mathcal{D}_m = \{(\bx_{n-m+1}, y_{n-m+1}), \ldots, (\bx_n, y_n)\}$. Define:
\begin{align*}
\Delta^{(m)} &= m\lambda \bw^* + \sum_{j=n-m+1}^n \nabla \ell((\bw^*)^\top \bx_j, y_j)\\
H_{\bw^*}^{(m)} &= \nabla^2 L(\bw^*; \mathcal{D} \setminus \mathcal{D}_m).
\end{align*}
The batch removal update is:
\begin{equation}
\bw^{(-m)} = \bw^* + \left[ H_{\bw^*}^{(m)} \right]^{-1} \Delta^{(m)}.
\label{eq:newton_update_batch}
\end{equation}

We derive bounds on the gradient residual norm for batch removal that are similar Theorem \ref{thm:asymptotic_bound} and Corollary \ref{cor:data_dependent}.\\

\begin{theorem}
Under the same regularity conditions of Theorem \ref{thm:asymptotic_bound}, we have that:
\begin{align*}
\| \nabla L(\bw^{(-m)}; \calD \setminus \calD_m) \|_2 &\leq \gamma (n-m) \left\| \left[ H_{\bw^*}^{(m)} \right]^{-1} \Delta^{(m)} \right\|_2^2 \\
&\leq \frac{4 \gamma m^2 C^2}{\lambda^2 (n-m)}.
\end{align*}
\label{thm:asymptotic_bound_batch}
\end{theorem}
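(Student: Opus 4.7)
The plan is to mimic the proof of Theorem \ref{thm:asymptotic_bound} step by step, substituting the single-sample quantities $\Delta$ and $H_{\bw^*}$ with their batch analogues $\Delta^{(m)}$ and $H_{\bw^*}^{(m)}$ and carefully tracking the bookkeeping caused by the fact that removing $m$ samples changes the effective regularization strength from $\lambda n$ to $\lambda(n-m)$.

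First I would establish the key identity $\nabla L(\bw^*; \calD \setminus \calD_m) = -\Delta^{(m)}$. The derivation starts from optimality of $\bw^*$ on the full dataset, $\sum_{i=1}^n \nabla \ell((\bw^*)^\top \bx_i, y_i) + \lambda n \bw^* = 0$. Subtracting the contributions from the unremoved points and combining the regularization terms yields
\[
\nabla L(\bw^*; \calD \setminus \calD_m) = -\sum_{j=n-m+1}^n \nabla \ell((\bw^*)^\top \bx_j, y_j) - m \lambda \bw^* = -\Delta^{(m)},
\]
which is the batch analogue of the observation used implicitly in Theorem \ref{thm:asymptotic_bound}. Next, applying the mean value theorem to the gradient map along the segment from $\bw^*$ to $\bw^{(-m)}$ gives, for some $\eta \in [0,1]$,
\[
\nabla L(\bw^{(-m)}; \calD \setminus \calD_m) = -\Delta^{(m)} + H_{\bw_\eta}^{(m)} \bigl[H_{\bw^*}^{(m)}\bigr]^{-1}\Delta^{(m)} = \bigl(H_{\bw_\eta}^{(m)} - H_{\bw^*}^{(m)}\bigr) \bigl[H_{\bw^*}^{(m)}\bigr]^{-1}\Delta^{(m)}.
\]

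To obtain the first inequality I would bound the Hessian difference using the $\gamma$-Lipschitz assumption on $\ell''$. Writing both Hessians as $\sum_{i=1}^{n-m} \ell''(\bw^\top \bx_i, y_i) \bx_i \bx_i^\top + \lambda (n-m) I_d$, their difference is $\sum_{i=1}^{n-m}\bigl(\ell''(\bw_\eta^\top \bx_i, y_i) - \ell''((\bw^*)^\top \bx_i, y_i)\bigr) \bx_i \bx_i^\top$. Since $\|\bx_i\|_2 \leq 1$, the triangle inequality together with Lipschitzness gives $\|H_{\bw_\eta}^{(m)} - H_{\bw^*}^{(m)}\|_2 \leq \gamma (n-m)\|\bw_\eta - \bw^*\|_2 \leq \gamma (n-m) \|[H_{\bw^*}^{(m)}]^{-1}\Delta^{(m)}\|_2$, yielding the claimed quadratic bound after submultiplicativity.

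For the absolute bound I would argue as follows. Optimality of $\bw^*$ gives $\|\bw^*\|_2 \leq C/\lambda$ from the gradient norm bound, so $\|\Delta^{(m)}\|_2 \leq m\lambda \cdot (C/\lambda) + mC = 2mC$. The Hessian $H_{\bw^*}^{(m)}$ has smallest eigenvalue at least $\lambda(n-m)$ because $\ell$ is convex, hence $\|[H_{\bw^*}^{(m)}]^{-1}\Delta^{(m)}\|_2 \leq 2mC/(\lambda(n-m))$. Squaring and multiplying by $\gamma(n-m)$ produces the final bound $4 \gamma m^2 C^2 / (\lambda^2 (n-m))$. There is no real obstacle here beyond the bookkeeping of the $m\lambda \bw^*$ term in $\Delta^{(m)}$ and the $\lambda(n-m)$ factor in the regularization; both arise naturally from the definitions and mirror the single-sample case exactly.
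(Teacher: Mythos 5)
Your proposal is correct and follows exactly the route the paper intends: its proof of this theorem is simply the remark that the argument of Theorem \ref{thm:asymptotic_bound} carries over with $n-m$ Hessian terms and $\Delta^{(m)}$ scaling linearly in $m$, which is precisely the bookkeeping you carry out (the identity $\nabla L(\bw^*; \calD \setminus \calD_m) = -\Delta^{(m)}$, the Taylor/mean-value step, the $\gamma(n-m)$ Lipschitz bound, and $\|\Delta^{(m)}\|_2 \leq 2mC$ with $\lambda(n-m)$-strong convexity). No gaps; your write-up is just a fully expanded version of the paper's one-line proof.
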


\begin{corollary}
The Newton update $\bw^{(-m)} = \bw^* + \left[ H_{\bw^*}^{(m)} \right]^{-1} \Delta^{(m)}$ satisfies:
\begin{align*}
&\| L(\bw^{(-m)}; \calD \setminus \calD_m) \|_2 \leq \\
& \gamma \| X^{(-m)} \|_2\! \left\| \left[ H_{\bw^*}^{(m)} \right]^{-1} \!\!\Delta^{(m)} \right\|_2 \! \left\| X^{(-m)} \! \left[ H_{\bw^*}^{(m)} \right]^{-1} \!\!\Delta^{(m)} \right\|_2\!,
\end{align*}
where $X^{(-m)}$ is the data matrix for $\calD \setminus \calD_m$ and $\gamma$ is the Lipschitz constant of $\ell''$.
\label{cor:data_dependent_batch}
\end{corollary}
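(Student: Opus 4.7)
The plan is to mirror the derivation of Corollary~\ref{cor:data_dependent}, adapted to the batch setting where $m$ points are removed at once. The starting identity I need is the analog of Equation~\ref{eq:gradient_residual} for batch removal:
\begin{equation*}
\nabla L(\bw^{(-m)}; \calD \setminus \calD_m) = \bigl(H_{\bw_\eta}^{(m)} - H_{\bw^*}^{(m)}\bigr)\bigl[H_{\bw^*}^{(m)}\bigr]^{-1} \Delta^{(m)},
\end{equation*}
for some $\bw_\eta = \bw^* + \eta \bigl[H_{\bw^*}^{(m)}\bigr]^{-1} \Delta^{(m)}$ with $\eta \in [0,1]$. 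This follows from the mean value theorem applied to $\nabla L(\cdot; \calD\setminus\calD_m)$ between $\bw^*$ and $\bw^{(-m)}$, combined with the observation that $\nabla L(\bw^*; \calD\setminus\calD_m) = \Delta^{(m)}$ (since $\bw^*$ is a stationary point of $L(\cdot;\calD)$, and removing the $m$ samples subtracts exactly the quantity whose negation defines $\Delta^{(m)}$). This is the same identity that underlies Theorem~\ref{thm:asymptotic_bound_batch}, so I can invoke it freely.

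Next I would use the explicit form of the Hessian on the reduced dataset,
\begin{equation*}
H_\bw^{(m)} = \bigl(X^{(-m)}\bigr)^\top D_\bw X^{(-m)} + \lambda(n-m) I_d,
\end{equation*}
where now $D_\bw$ is the $(n-m)\times(n-m)$ diagonal matrix whose entries are $\ell''(\bw^\top \bx_i, y_i)$ for the $n-m$ remaining samples. The crucial observation is that the regularization term is independent of $\bw$, so it cancels in the difference, giving
\begin{equation*}
H_{\bw_\eta}^{(m)} - H_{\bw^*}^{(m)} = \bigl(X^{(-m)}\bigr)^\top \bigl(D_{\bw_\eta} - D_{\bw^*}\bigr) X^{(-m)}.
\end{equation*}

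With this factorization, I split the norm submultiplicatively by isolating one copy of $X^{(-m)}$ on the left and leaving $X^{(-m)}\bigl[H_{\bw^*}^{(m)}\bigr]^{-1}\Delta^{(m)}$ intact on the right:
\begin{equation*}
\bigl\|\bigl(X^{(-m)}\bigr)^\top \bigl(D_{\bw_\eta}-D_{\bw^*}\bigr) X^{(-m)} \bigl[H_{\bw^*}^{(m)}\bigr]^{-1}\Delta^{(m)}\bigr\|_2 \le \|X^{(-m)}\|_2 \,\|D_{\bw_\eta}-D_{\bw^*}\|_2\, \bigl\|X^{(-m)}\bigl[H_{\bw^*}^{(m)}\bigr]^{-1}\Delta^{(m)}\bigr\|_2.
\end{equation*}
Finally, I bound the middle factor exactly as in the single-point corollary: the spectral norm of a diagonal matrix equals its largest entry, so $\gamma$-Lipschitzness of $\ell''$ combined with $\|\bx_i\|_2 \le 1$ yields $\|D_{\bw_\eta}-D_{\bw^*}\|_2 \le \gamma \|\bw_\eta - \bw^*\|_2 \le \gamma \bigl\|\bigl[H_{\bw^*}^{(m)}\bigr]^{-1}\Delta^{(m)}\bigr\|_2$, using that $\eta \in [0,1]$. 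Substituting gives the claim.

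I do not anticipate a real obstacle here; the argument is a verbatim transposition of Corollary~\ref{cor:data_dependent} with $n-1 \to n-m$, $\Delta \to \Delta^{(m)}$, $H_{\bw^*}\to H_{\bw^*}^{(m)}$, and $X^- \to X^{(-m)}$. The only thing worth checking carefully is that the mean value step is applied to the correct function (the loss on $\calD\setminus\calD_m$, not on $\calD$) and that the regularization constant is $\lambda(n-m)$ rather than $\lambda n$, so that it cleanly drops out of the Hessian difference.
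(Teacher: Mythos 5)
Your proposal is correct and is essentially the paper's own (implicit) argument: the batch corollary is obtained exactly as Corollary~\ref{cor:data_dependent}, by combining the Taylor identity behind Theorem~\ref{thm:asymptotic_bound_batch} with the factorization $H_{\bw_\eta}^{(m)} - H_{\bw^*}^{(m)} = (X^{(-m)})^\top (D_{\bw_\eta} - D_{\bw^*}) X^{(-m)}$, a submultiplicative split, and the diagonal-norm bound $\|D_{\bw_\eta}-D_{\bw^*}\|_2 \le \gamma \|[H_{\bw^*}^{(m)}]^{-1}\Delta^{(m)}\|_2$. One small slip in your parenthetical: stationarity of $\bw^*$ for $L(\cdot;\calD)$ gives $\nabla L(\bw^*; \calD\setminus\calD_m) = -\Delta^{(m)}$ rather than $+\Delta^{(m)}$ (this minus sign is what produces the cancellation yielding the stated identity), but since you invoke that identity as established in Theorem~\ref{thm:asymptotic_bound_batch}, the rest of the argument is unaffected.
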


\begin{algorithm}[t]
\caption{Training of a certified removal-enabled model.}
\label{alg:training}
\begin{algorithmic}[1]
\STATE \textbf{Input}: Dataset $\calD$, loss $\ell$, parameters $\sigma,\lambda>0$.
\STATE Sample $\bb \sim \mathcal{N}(0, \sigma)^d$.
\STATE \textbf{Return}: $\text{argmin}_{\bw \in \mathbb{R}^d} \scriptstyle{\sum_{i=1}^n \ell(\bw^\top \bx_i, y_i) + \lambda n \| \bw \|_2^2 + \bb^\top \bw}$.
\end{algorithmic}
\end{algorithm}

\begin{algorithm}[t]
\caption{Repeated certified removal of data batches.}
\label{alg:removal}
\begin{algorithmic}[1]
\STATE \textbf{Input}: Dataset $\calD$, loss $\ell$, parameters $\epsilon,\delta,\sigma,\lambda>0$.
\STATE \hspace{6ex} Lipschitz constant $\gamma$ of $\ell''$.
\STATE \hspace{6ex} Solution $\bw$ computed by Algorithm \ref{alg:training}.
\STATE \hspace{6ex} Sequence of batches of training sample
\STATE \hspace{6ex} indices to be removed: $B_1, B_2, \ldots$
\STATE Gradient residual bound $\beta \gets 0$.
\STATE $c \gets \sqrt{2 \log(1.5/\delta)}$.
\STATE $K \gets \sum_{i=1}^n \bx_i \bx_i^\top$.
\STATE $X \gets [\bx_1 | \bx_2 | \cdots | \bx_n]^\top$.
\FOR{$j=1,2,\ldots$}
\STATE $\Delta \gets |B_j| \lambda \bw + \sum_{i \in B_j} \nabla \ell(\bw^\top \bx_i, y_i)$.
\STATE $H \gets \sum_{i : i \notin B_1,B_2,\ldots,B_j} \nabla^2 \ell(\bw^\top \bx_i, y_i)$.
\STATE $X \gets \texttt{remove\_rows}(X, B_j)$.
\STATE $K \gets K - \sum_{i \in B_j} \bx_i \bx_i^\top$.
\STATE $\beta \gets \beta + \gamma \sqrt{\| K \|_2} \cdot \|H^{-1} \Delta\|_2 \cdot \|XH^{-1} \Delta\|_2$.
\IF{$\beta > \sigma \epsilon / c$}
\STATE Re-train from scratch using Algorithm \ref{alg:training}.
\ELSE
\STATE $\bw \gets \bw + H^{-1} \Delta$.
\ENDIF
\ENDFOR
\end{algorithmic}
\end{algorithm}

Interestingly, the gradient residual bound in Theorem \ref{thm:asymptotic_bound_batch} scales quadratically with the number of removals, as opposed to linearly when removing examples one-by-one. This increase in error is due to a more crude approximation of the Hessian, that is, we compute the Hessian only once at the current solution $\bw^*$ rather than once per removed data. 

\paragraph{Reducing online computation.} The Newton update requires forming and inverting the Hessian. Although the $O(d^3)$ cost of inversion is relatively limited for small $d$ and inversion can be done efficiently on GPUs, the cost of forming the Hessian is $O(d^2 n)$, which may be problematic for large datasets. However, the Hessian can be formed at training time, \emph{i.e.}, before the data to be removed is presented, and only the inverse needs to be computed at removal time.

When computing the data-dependent bound, a similar technique can be used for calculating the term $\| X^- H_{\bw^*}^{-1} \Delta \|_2$ -- which involves the product of the $(n-1) \times d$ data matrix $X^-$ with a $d$-dimensional vector. We can reduce the online component of this computation to $O(d^3)$ by forming the SVD of $X$ offline and applying online down-dates \citep{gu1995downdating} to form the SVD of $X^-$ by solving an eigen-decomposition problem on a $d \times d$ matrix. It can be shown that this technique reduces the computation of $\| X^- H_{\bw^*}^{-1} \Delta \|_2$ to involve only $d \times d$ matrices and $d$-dimensional vectors, which enables the online computation cost to be independent of $n$.

\paragraph{Pseudo-code.} We present pseudo-code for training removal-enabled models and for the $(\epsilon,\delta)$-CR Newton update mechanism. During training (Algorithm \ref{alg:training}), we add a random linear term to the training loss by sampling a Gaussian noise vector $\bb$. The choice of $\sigma$ determines a ``removal budget'' according to Theorem \ref{thm:removal_main}:  the maximum gradient residual norm that can be incurred is $\sigma \epsilon / c$. When optimizing the training loss, any optimizer with convergence guarantee for strongly convex loss functions can be used to find the minimizer in Algorithm \ref{alg:training}. We use L-BFGS \citep{liu1989lbfgs} in our experiments as it was the most efficient of the optimizers we tried.

During removal (line 19 in Algorithm \ref{alg:removal}), we apply the batch Newton update (Equation \ref{eq:newton_update_batch}) and compute the gradient residual norm bound using Corollary \ref{cor:data_dependent_batch} (line 15 in Algorithm \ref{alg:removal}). The variable $\beta$ accumulates the gradient residual norm over all removals. If the pre-determined budget of $\sigma \epsilon / c$ is exceeded, we train a new removal-enabled model from scratch using Algorithm \ref{alg:training} on the remaining data points.

\begin{table*}[t]
\centering
\resizebox{\textwidth}{!}{
\begin{tabular}{l|cccc}
\toprule
\textbf{Dataset} & \bf MNIST (\S\ref{sec:linear}) & \bf LSUN (\S\ref{sec:text_and_image}) & \bf SST (\S\ref{sec:text_and_image}) & \bf SVHN (\S\ref{sec:non-linear}) \\
\midrule
\textbf{Removal setting} & CR Linear & Public Extractor + CR Linear & Public Extractor + CR Linear & DP Extractor + CR Linear \\
\textbf{Removal time} & 0.04s & 0.48s & 0.07s & 0.27s \\
\midrule
\textbf{Training time} & 15.6s & 124s & 61.5s & 1.5h \\
\bottomrule
\end{tabular}
}
\vspace{-2ex}
\caption{\textbf{Summary of removal and training times observed in our experiments.} For LSUN and SST, the public extractor is trained on a public dataset and hence removal is only applied to the linear model. For SVHN, removal is applied to a linear model that operates on top of a differentially private feature extractor. In all cases, using the Newton update to (certifiably) remove data is several orders of magnitude faster than re-training the model from scratch.}
\label{tab:timings}
\end{table*}

\begin{figure*}[t!]
\centering
\includegraphics[width=\textwidth]{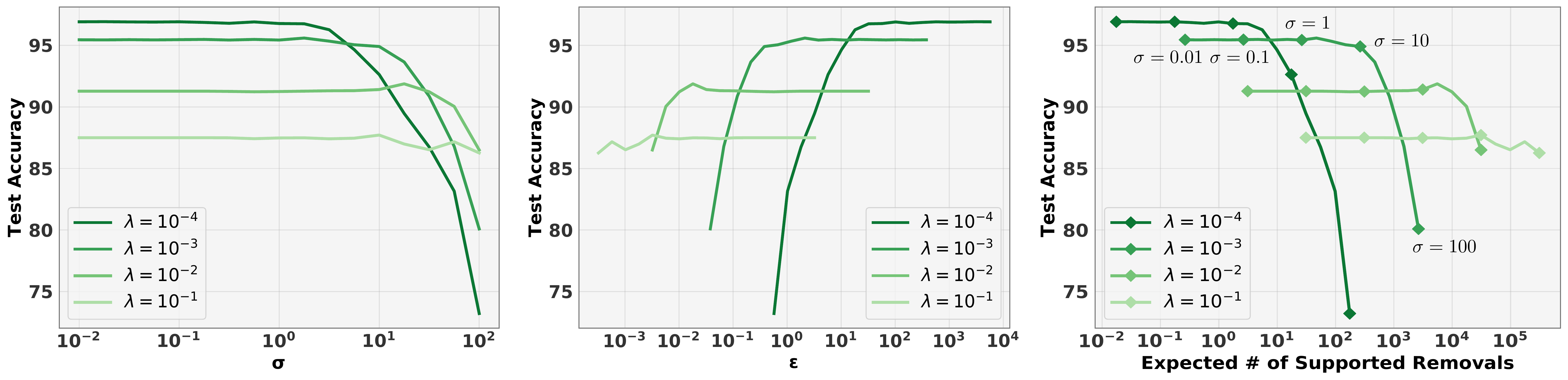}
\vspace{-5ex}
\caption{\textbf{Linear logistic regression on MNIST.} \textbf{Left}: Effect of $L_2$-regularization parameter, $\lambda$, and standard deviation of the objective perturbation, $\sigma$, on test accuracy. \textbf{Middle}: Effect of $\epsilon$ on test accuracy when supporting 100 removals. \textbf{Right}: Trade-off between accuracy and supported number of removals at $\epsilon=1$. At a given $\epsilon$, higher $\lambda$ and $\sigma$ values reduce test accuracy but allow for many more removals.} 
\label{fig:mnist_std_lam}
\vspace{-1ex}
\end{figure*}

\subsection{Non-Linear Models}
Deep learning models often apply a linear
model to features extracted by a network pre-trained on a public dataset like ImageNet \citep{ren2015faster, he2017mask, zhao2017pyramid, carreira2017quovadis} for vision tasks, or from language model trained on public  text corpora \citep{devlin2019bert,dai2019transformer,yang2019xlnet,liu2019roberta} for natural language tasks. In such setups, we only need to worry about data removal from the linear model that is applied to the output of the feature extractor.

When feature extractors are trained on private data as well, we can use our certified-removal mechanism on linear models that are applied to the output of a differentially-private feature extraction network \citep{abadi2016deep}.\\

\begin{theorem}
Suppose $\Phi$ is a randomized learning algorithm that is $(\epsilon_\text{DP}, \delta_\text{DP})$-differentially private, and the outputs of $\Phi$ are used in a linear model by minimizing $L_\bb$ and using a removal mechanism that guarantees $(\epsilon_\text{CR},\delta_\text{CR})$-certified removal. Then the entire procedure guarantees
$(\epsilon_\text{DP} \!+\! \epsilon_\text{CR}, \delta_\text{DP} \!+\! \delta_\text{CR})$-certified removal.
\label{thm:extractor}
\end{theorem}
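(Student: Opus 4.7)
The plan is a two-step hybrid argument that decomposes the divergence between the ``removed'' pipeline and the ``retrained'' pipeline through an intermediate distribution. Let $\mathcal{M}_F(\calD)$ denote the distribution of the full pipeline output on $\calD$ with $\bx$ removed: train $\Phi$ on $\calD$, train the linear model by minimizing $L_{\bb}$ on the features $\{(\Phi(\bx_i), y_i)\}_{i=1}^n$, then apply the Newton update removal mechanism $M$ to remove $\bx$. Let $\mathcal{M}_G(\calD)$ be the baseline: retrain $\Phi$ from scratch on $\calD \setminus \bx$ and minimize $L_{\bb}$ on the corresponding features. I would introduce an intermediate $\mathcal{M}_H(\calD)$ in which $\Phi$ is trained on $\calD \setminus \bx$ but the linear model is trained on the full $\calD$ under these features and $M$ is then used to remove $\bx$.

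The first step is to show $\mathcal{M}_F \approx_{(\epsilon_{\text{DP}}, \delta_{\text{DP}})} \mathcal{M}_H$ via the post-processing property of DP. The only difference between the two is whether $\Phi$ is trained on $\calD$ or on $\calD \setminus \bx$; everything downstream (feature computation on $\calD$, minimization of $L_{\bb}$, and the Newton update) is a fixed randomized function of the output of $\Phi$. Since $\Phi$ is $(\epsilon_{\text{DP}},\delta_{\text{DP}})$-DP with respect to the pair $(\calD, \calD\setminus\bx)$ and DP is preserved under post-processing, the claim follows. The second step is to show $\mathcal{M}_H \approx_{(\epsilon_{\text{CR}},\delta_{\text{CR}})} \mathcal{M}_G$ using the certified removal guarantee of the linear stage. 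Condition on a realization $\phi$ of $\Phi(\calD \setminus \bx)$; relative to $\phi$ both hybrids use exactly the same feature extractor, so the linear model's $(\epsilon_{\text{CR}},\delta_{\text{CR}})$-CR guarantee applied to inputs $\{(\phi(\bx_i),y_i)\}$ gives, for every measurable $\mathcal{T}$, $P(\mathcal{M}_H \in \mathcal{T} \mid \phi) \leq e^{\epsilon_{\text{CR}}} P(\mathcal{M}_G \in \mathcal{T} \mid \phi) + \delta_{\text{CR}}$. Integrating against the density of $\Phi(\calD\setminus\bx)$ transports this pointwise bound to the marginals.

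Chaining the two inequalities via the triangle inequality for $(\epsilon,\delta)$-indistinguishability then yields $\mathcal{M}_F \approx_{(\epsilon_{\text{DP}}+\epsilon_{\text{CR}},\, \delta_{\text{DP}}+\delta_{\text{CR}})} \mathcal{M}_G$, which is exactly the required certified removal guarantee. The main bookkeeping subtlety I anticipate is in the $\delta$ term: naively multiplying out the two inequalities produces an extra $e^{\epsilon_{\text{DP}}}$ factor in front of $\delta_{\text{CR}}$, so to recover the clean additive bound one should appeal to the basic composition argument of differential privacy (Dwork \& Roth), where one partitions the output space according to where the privacy loss random variable is ``bad'' and expends each $\delta$ budget at most once rather than multiplicatively. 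The remaining step is a sanity check that the CR guarantee of the linear stage is formulated so that $\phi$ can be treated as fixed side information (not random training data) when we condition on it, which is the case since the features are deterministic functions of $\phi$ and the raw inputs.
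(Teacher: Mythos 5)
Your argument is correct and rests on the same two-part decomposition as the paper's own proof: one step exploiting the $(\epsilon_\text{DP},\delta_\text{DP})$-indistinguishability of $\Phi(\calD)$ and $\Phi(\calD\setminus\bx)$, and one step applying the linear stage's certified-removal guarantee conditionally on a fixed extractor $\phi$, followed by integration over $\phi$. The mechanics differ in two ways. First, the paper applies the CR bound conditionally on $\phi\sim\Phi(\calD)$ and then changes measure to the law of $\Phi(\calD\setminus\bx)$ via a Radon--Nikodym derivative shown to be bounded (a.e.) by $e^{\epsilon_\text{DP}}$, whereas you run the mirror-image hybrid ($\Phi$ retrained on $\calD\setminus\bx$, head still trained on $\calD$ and then unlearned) and chain two indistinguishability statements; these are interchangeable formulations of the same idea, and your step~1 is sound because the downstream map, although it uses $\calD$, is identical in both hybrids, so the data-processing inequality for $(\epsilon,\delta)$-indistinguishability applies. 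Second, and more substantively, you are more careful about the $\delta$'s than the paper: the paper treats each $(\epsilon,\delta)$ guarantee as a pure-$\epsilon$ bound holding ``with probability $1-\delta$'' and adds the failure probabilities, which is the usual informal shortcut, while you correctly observe that naive chaining leaves an $e^{\epsilon_\text{DP}}$ factor on $\delta_\text{CR}$ and that the clean additive $\delta_\text{DP}+\delta_\text{CR}$ is recovered by the basic adaptive-composition argument (view the extractor and the head as a two-stage composition whose joint output post-processes to the final model). Your closing sanity check---that the CR definition quantifies over all datasets and hence applies to the featurized data under any fixed $\phi$---is precisely what licenses the conditional step in both proofs.
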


The advantage of this approach over training the entire network in a differentially private manner \citep{abadi2016deep} is that the (removal-enabled) linear model can be trained using a much smaller perturbation, which may greatly boost the accuracy of the final model (see Section \ref{sec:non-linear}).

\section{Experiments}
\label{sec:exp}
We test our certified removal mechanism in three settings: (1) removal from a standard linear logistic regressor, (2) removal from a linear logistic regressor that uses a feature extractor pre-trained on public data, and (3) removal from a non-linear logistic regressor by using a differentially private feature extractor. Code reproducing the results of our experiments is publicly available from \url{https://github.com/facebookresearch/certified-removal}. Table \ref{tab:timings} summarizes the training and removal times measured in our experiments.

\subsection{Linear Logistic Regression}
\label{sec:linear}

We first experiment on the MNIST digit classification dataset. For simplicity, we restrict to the binary classification problem of distinguishing between digits 3 and 8, and train a regularized logistic regressor using Algorithm \ref{alg:training}. Removal is performed using Algorithm \ref{alg:removal} with $\delta=\texttt{1e-4}$. 

\paragraph{Effects of $\lambda$ and $\sigma$.} Training a removal-enabled model using Algorithm \ref{alg:training} requires selecting two hyperparameters: the $L_2$-regularization parameter, $\lambda$, and the standard deviation, $\sigma$, of the sampled perturbation vector $\bb$. Figure \ref{fig:mnist_std_lam} shows the effect of $\lambda$ and $\sigma$ on test accuracy and the expected number of removals supported before re-training. When fixing the supported number of removals at 100 (middle plot), the value of $\sigma$ is inversely related to $\epsilon$ (\emph{cf.} line 16 of Algorithm \ref{alg:removal}), hence higher $\epsilon$ results in smaller $\sigma$ and improved accuracy. Increasing $\lambda$ enables more removals before re-training (left and right plots) because it reduces the gradient residual norm, but very high values of $\lambda$ negatively affect test accuracy because the regularization term dominates the loss.

\paragraph{Tightness of the gradient residual norm bounds.} In Algorithm \ref{alg:removal} , we use the data-dependent bound from Corollaries \ref{cor:data_dependent} and \ref{cor:data_dependent_batch} to compute a \emph{per-data} or \emph{per-batch} estimate of the removal error, as opposed to the \emph{worst-case} bound in Theorems \ref{thm:asymptotic_bound} and \ref{thm:asymptotic_bound_batch}. Figure \ref{fig:mnist_tightness} shows the value of different bounds as a function of the number of removed points. We consider two removal scenarios: single point removal and batch removal with batch size $m=10$. We observe three phenomena: (1) The worst-case bounds (light blue and light green) are \emph{several orders of magnitude} higher than the data-dependent bounds (dark blue and dark green), which means that the number of supported removals is \emph{several orders of magnitude} higher when using the data-dependent bounds. (2) The cumulative sum of the gradient residual norm bounds is approximately linear for both the single and batch removal data-dependent bounds. (3) There remains a large gap between the data-dependent norm bounds and the true value of the gradient residual norm (dashed line), which suggests that the utility of our removal mechanism may be further improved via tighter analysis.

\begin{figure}[t]
\centering
\includegraphics[width=\columnwidth]{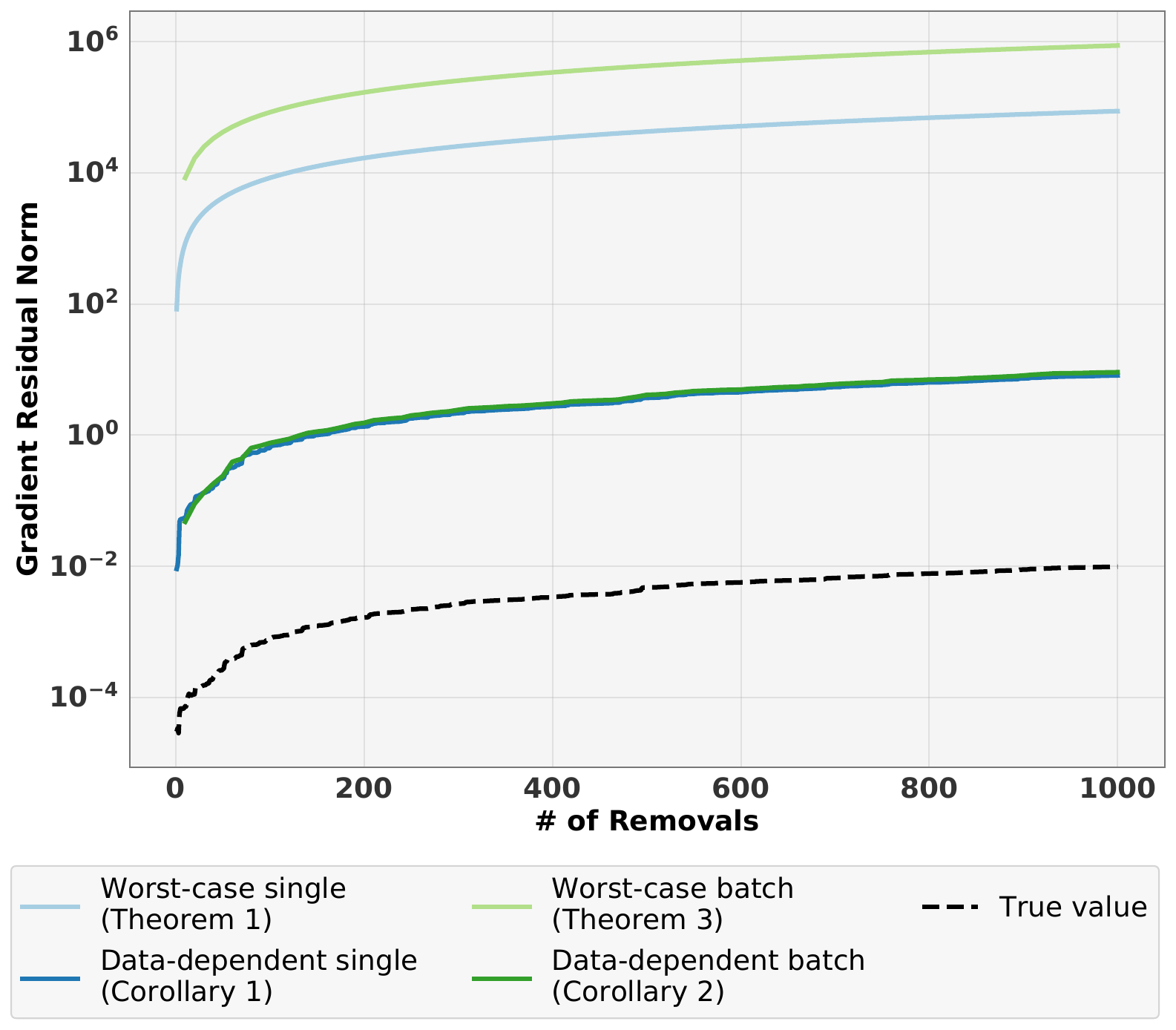}
\vspace{-4ex}
    \caption{\textbf{Linear logistic regression on MNIST.} Gradient residual norm (on log scale) as a function of the number of removals.}
\label{fig:mnist_tightness}
\vspace{-2ex}
\end{figure}

\begin{figure*}[t]
\centering
\includegraphics[width=\textwidth]{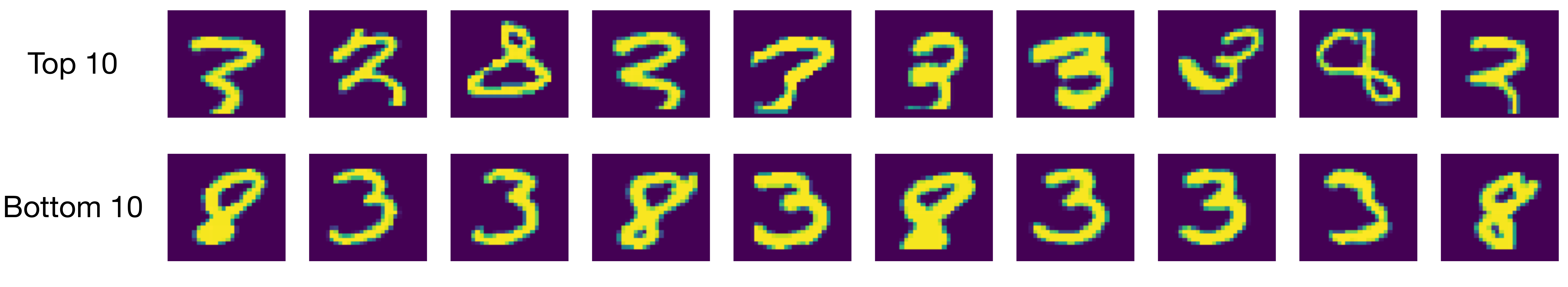}
\vspace{-5ex}
    \caption{\textbf{MNIST training digits sorted by norm of the removal update $\mathbf{\| H_{\bw^*}^{-1} \Delta \|_2}$.} The samples with the highest norm (\textbf{top}) appear to be atypical, making it harder to undo their effect on the model. The samples with the lowest norm (\textbf{bottom}) are prototypical 3s and 8s, and hence are much easier to remove.}
\label{fig:mnist_grad_norm}
\end{figure*}

\paragraph{Gradient residual norm and removal difficulty.} The data-dependent bound is governed by the norm of the update $H_{\bw^*}^{-1} \Delta$, which measures the influence of the removed point on the parameters and varies greatly depending on the training sample being removed. 
Figure \ref{fig:mnist_grad_norm} shows the training samples corresponding to the 10 largest and smallest values of $\| H_{\bw^*}^{-1} \Delta \|_2$. There are large visual differences between these samples: large values correspond to oddly-shaped 3s and 8s, while small values correspond to ``prototypical'' digits. This suggests that removing outliers is harder, because the model tends to memorize their details and their impact on the model is easy to distinguish from other samples.

\begin{figure*}[t]
\begin{minipage}{.63\textwidth}
\centering
\includegraphics[width=\textwidth]{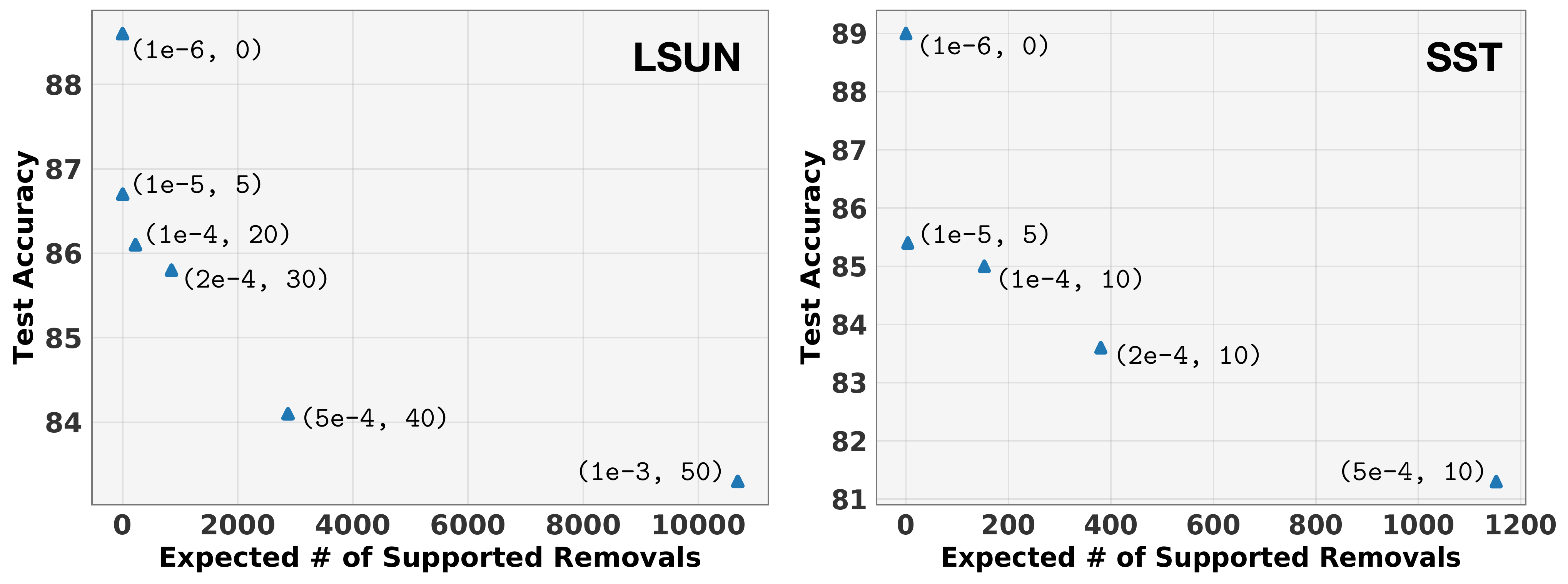}
\vspace{-5ex}
\caption{\textbf{Linear models trained on public feature extractors.} Trade-off between test accuracy and the expected number of supported removals (at $\epsilon\!=\!1$) on LSUN (\textbf{left}) and SST (\textbf{right}). The setting of $(\lambda, \sigma)$ is shown next to each point. The number of supported removals rapidly increases when accuracy is slightly sacrificed.}
\label{fig:lsun_sst_tradeoff}
\end{minipage}
\hspace{2ex}
\begin{minipage}{.33\textwidth}
\centering
\includegraphics[width=\columnwidth]{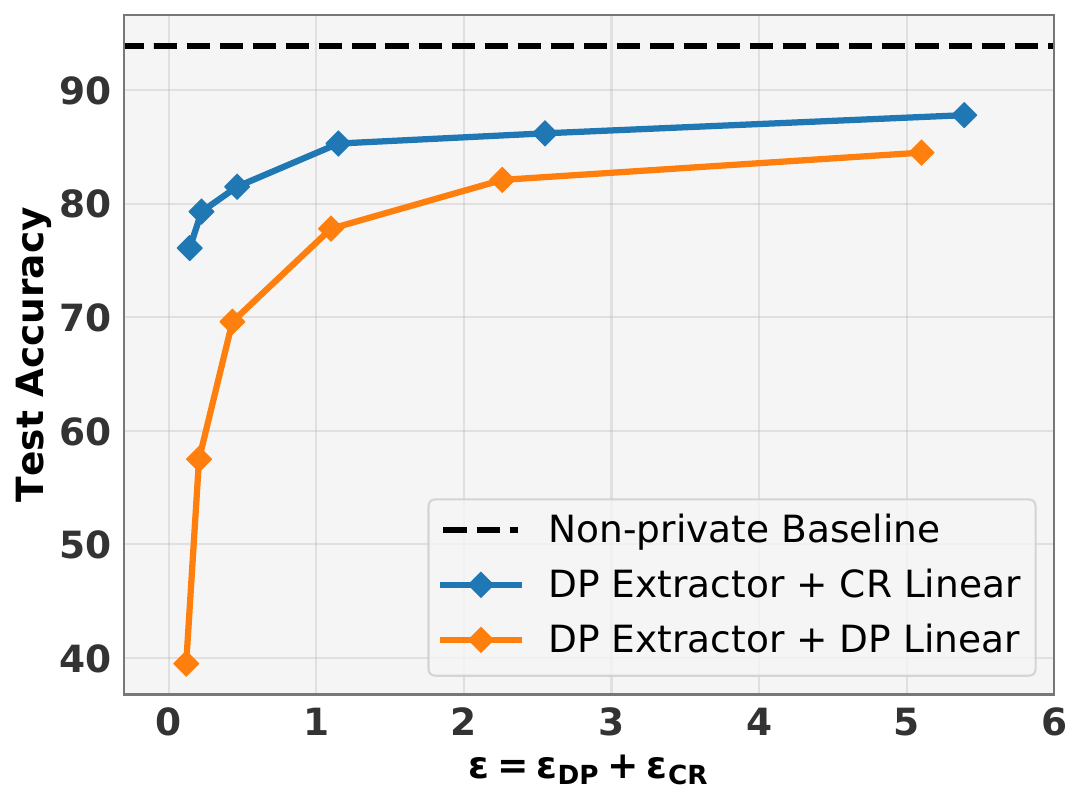}
\vspace{-5ex}
\caption{\textbf{Using $\mathbf{\epsilon}$-DP features.} Trade-off between $\epsilon$ and test accuracy on SVHN of models that support 10 removals. Dashed line shows non-private model accuracy.} 
\label{fig:svhn_tradeoff}
\end{minipage}
\vspace{-2ex}
\end{figure*}

\subsection{Non-Linear Logistic Regression using Public,~Pre-Trained Feature Extractors}
\label{sec:text_and_image}

We consider the common scenario in which a feature extractor is trained on public data (\emph{i.e.}, does not require removal), and a linear classifier is trained on these features using non-public data. We study two tasks: (1) scene classification on the LSUN dataset and (2) sentiment classification on the Stanford Sentiment Treebank (SST) dataset. We subsample the LSUN dataset to 100K images per class (\emph{i.e.}, $n = 1\textrm{M}$).

For LSUN, we extract features using a ResNeXt-101 model \citep{xie2017resnext} trained on 1B Instagram images \citep{mahajan2018exploring} and fine-tuned on ImageNet \citep{deng2009imagenet}. For SST, we extract features using a pre-trained RoBERTa \citep{liu2019roberta} language model. At removal time, we use Algorithm \ref{alg:removal} with $\epsilon=1$ and $\delta=\texttt{1e-4}$ in both experiments.

\paragraph{Result on LSUN.} We reduce the 10-way LSUN classification task to 10 one-versus-all tasks and randomly subsample the negative examples to ensure the positive and negative classes are balanced in all binary classification problems. Subsampling benefits  removal since a training sample does not always need to be removed from all 10 classifiers.

Figure \ref{fig:lsun_sst_tradeoff} (left) shows the relationship between test accuracy and the expected number of removals on LSUN. The value of $(\lambda, \sigma)$ is shown next to each point, with the left-most point corresponding to training a regular model that supports no removal. At the cost of a small drop in accuracy (from $88.6\%$ to $83.3\%$), the model supports over $10,000$ removals before re-training is needed. As shown in Table \ref{tab:timings}, the computational cost for removal is more than $250 \times$ smaller than re-training the model on the remaining data points.

\paragraph{Result on SST.} SST is a sentiment classification dataset commonly used for benchmarking language models \citep{wang2019glue}. We use SST in the binary classification task of predicting whether or not a movie review is positive. Figure \ref{fig:lsun_sst_tradeoff} (right) shows the trade-off between accuracy and supported number of removals. The regular model (left-most point) attains a test accuracy of $89.0\%$, which matches the performance of competitive prior work \citep{tai2015improved, wieting2016towards, looks2017deep}. As before, a large number of removals is supported at a small loss in test accuracy; the computational costs for removal are $870\times$ lower than for re-training the model.

\subsection{Non-linear Logistic Regression using Differentially~Private Feature Extractors}
\label{sec:non-linear}

When public data is not available for training a feature extractor, we can train a differentially private feature extractor on private data \citep{abadi2016deep} and apply Theorem \ref{thm:extractor} to remove data from the final (removal-enabled) linear layer. This approach has a major advantage over training the entire model using the approach of \citep{abadi2016deep} because the final linear layer can partly correct for the noisy features produced by the private feature extractor.

We evaluate this approach on the Street View House Numbers (SVHN) digit classification dataset. We compare it to a differentially private CNN\footnote{We use a simple CNN with two convolutional layers with 64 filters of size $3 \times 3$ and $2 \times 2$ max-pooling.} trained using the technique of \citet{abadi2016deep}. Since the CNN is differentially private, certified removal is achieved trivially without applying any removal. For a fair comparison, we fix $\delta = \texttt{1e-4}$ and train $(\epsilon_\text{DP}/10, \delta)$-private CNNs for a range of values of $\epsilon_\text{DP}$. By the union bound for group privacy \citep{dwork2011differential}, the resulting models support up to then $(\epsilon_\text{DP}, \delta)$-CR removals.

To measure the effectiveness of Theorem \ref{thm:extractor} for certified data removal, we also train an $(\epsilon_\text{DP}/10, \delta/10)$-differentially private CNN and extract features from its penultimate linear. We use these features in Algorithm \ref{alg:training} to train 10 one-versus-all classifiers with total failure probability of at most $\frac{9}{10} \delta$. Akin to the experiments on LSUN, we subsample the negative examples in each of the binary classifiers to speed up removal. The expected contribution to $\epsilon$ from the updates is set to $\epsilon_\text{CR} \approx \epsilon_\text{DP}/10$, hence achieving $(\epsilon, \delta)$-CR with $\epsilon = \epsilon_\text{DP} + \epsilon_\text{CR} \approx \epsilon_\text{DP} + \epsilon_\text{DP} / 10$ after 10 removals.

Figure \ref{fig:svhn_tradeoff} shows the relationship between test accuracy and $\epsilon$ for both the fully private and the Newton update removal methods. For reference, the dashed line shows the accuracy obtained by a non-private CNN that does not support removal. For smaller values or $\epsilon$, training a private feature extractor (blue) and training the linear layer using Algorithm \ref{alg:training} attains much higher test accuracy than training a fully differentially private model (orange). In particular, at $\epsilon \approx 0.1$, the fully differentially private baseline's accuracy is only $22.7\%$, whereas our approach attains a test accuracy of $71.2\%$. Removal from the linear model trained on top of the private extractor only takes 0.27s, compared to more than 1.5 hour when re-training the CNN from scratch.

\section{Related Work}
Removal of specific training samples from models has been studied in prior work on decremental learning \citep{cauwenberghs2000incremental, karasuyama2009multiple, tsai2014incremental} and machine unlearning \citep{cao2015towards}. \citet{ginart2019making} studied the problem of removing data from $k$-means clusterings. These studies aim at \emph{exact removal} of one or more training samples from a trained model: their success measure is closeness to the optimal parameter or objective value. This suffices for purposes such as quickly evaluating the leave-one-out error or correcting mislabeled data, but it does not provide a formal guarantee of statistical indistinguishability. Our work leverages differential privacy to develop a more rigorous definition of data removal. Concurrent work \citep{bourtoule2019machine} presents an approach that allows certified removal with $\epsilon=0$.

Our definition of certified removal uses the same notion of indistinguishability as that of differential privacy. Many classical machine learning algorithms have been shown to support differentially private versions, including PCA \citep{chaudhuri2012pca}, matrix factorization \citep{liu2015fast}, linear models \citep{chaudhuri2011dperm}, and neural networks \citep{abadi2016deep}. Our removal mechanism can be viewed on a spectrum of noise addition techniques for preserving data privacy, balancing between computation time for the removal mechanism and model utility. We hope to further explore the connections between differential privacy and certified removal in follow-up work to design certified-removal algorithms with better guarantees and computational efficiency.

\section{Conclusion}
We have studied a mechanism that quickly ``removes'' data from a machine-learning model up to a differentially private guarantee: the model after removal is indistinguishable from a model that never saw the removed data to begin with. While we demonstrate that this mechanism is practical in some settings, at least four challenges for future work remain. (1) The Newton update removal mechanism requires inverting the Hessian matrix, which may be problematic. Methods that approximate the Hessian with near-diagonal matrices may address this problem. (2) Removal from models with non-convex losses is unsupported; it may require local analysis of the loss surface to show that data points do not move the model out of a local optimum. (3) There remains a large gap between our data-dependent bound and the true gradient residual norm, necessitating a tighter analysis. (4) Some applications may require the development of alternative, less constraining notions of data removal.

\newpage
\bibliography{citations}
\bibliographystyle{apalike}

\newpage
\appendix
\onecolumn
\setcounter{theorem}{0}

\section{Appendix}

We present proofs for theorems stated in the main paper.\\

\begin{theorem}
Suppose that $\forall (\bx_i, y_i) \in \calD, \bw \in \mathbb{R}^d: \|\nabla \ell(\bw^\top \bx_i, y_i)\|_2 \leq C$. Suppose also that $\ell''$ is $\gamma$-Lipschitz and $\| \bx_i \|_2 \leq 1$ for all $(\bx_i, y_i) \in \calD$. Then:
\begin{align*}
\| \nabla L(\bw^-; \calD') \|_2 &= \|(H_{\bw_\eta} - H_{\bw^*}) H_{\bw^*}^{-1} \Delta\|_2 \\
&\leq \gamma (n-1) \| H_{\bw^*}^{-1} \Delta \|_2^2 \leq \frac{4 \gamma C^2}{\lambda^2 (n-1)},
\end{align*}
where $H_{\bw_\eta}$ denotes the Hessian of $L(\cdot; \calD')$ at the parameter vector $\bw_\eta = \bw^* + \eta H_{\bw^*}^{-1} \Delta$ for some $\eta \in [0,1]$.
\end{theorem}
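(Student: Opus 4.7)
The plan is to reduce the gradient residual to a form involving the difference of Hessians at two nearby points, and then to bound each factor separately. First I would exploit the optimality of $\bw^*$ for the full training loss $L(\cdot;\calD)$: since $\nabla L(\bw^*;\calD)=0$, peeling off the contribution of $(\bx_n,y_n)$ from the sum yields the identity $\nabla L(\bw^*;\calD')=-\Delta$, where $\Delta$ is exactly the per-sample gradient defined in the paper. This is the entry point that connects the Newton step to a displaced zero-gradient condition on the reduced dataset.

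Next I would expand $\nabla L(\bw^-;\calD')$ around $\bw^*$ using the fundamental theorem of calculus (or equivalently a mean-value form of Taylor's theorem for vector-valued maps): $\nabla L(\bw^-;\calD')=\nabla L(\bw^*;\calD')+H_{\bw_\eta}(\bw^--\bw^*)$ for some $\bw_\eta=\bw^*+\eta H_{\bw^*}^{-1}\Delta$ with $\eta\in[0,1]$. Substituting the identity $\nabla L(\bw^*;\calD')=-\Delta$ and $\bw^--\bw^*=H_{\bw^*}^{-1}\Delta$ immediately produces $(H_{\bw_\eta}-H_{\bw^*})H_{\bw^*}^{-1}\Delta$, giving the first equality of the theorem.

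For the first inequality, I would use the structure of $H_{\bw}=\sum_{i=1}^{n-1}\ell''(\bw^\top\bx_i,y_i)\bx_i\bx_i^\top+\lambda(n-1)I$. The regularization term cancels in $H_{\bw_\eta}-H_{\bw^*}$, leaving a sum of rank-one terms weighted by differences $\ell''(\bw_\eta^\top\bx_i,y_i)-\ell''((\bw^*)^\top\bx_i,y_i)$. Using $\gamma$-Lipschitzness of $\ell''$, Cauchy--Schwarz, and $\|\bx_i\|_2\le 1$, each such difference is at most $\gamma\|\bw_\eta-\bw^*\|_2$, and summing the spectral norms of the rank-one terms gives $\|H_{\bw_\eta}-H_{\bw^*}\|_2\le\gamma(n-1)\|\bw_\eta-\bw^*\|_2$. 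Since $\eta\in[0,1]$, $\|\bw_\eta-\bw^*\|_2\le\|H_{\bw^*}^{-1}\Delta\|_2$, producing the middle bound $\gamma(n-1)\|H_{\bw^*}^{-1}\Delta\|_2^2$.

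Finally, for the last inequality I would bound $\|H_{\bw^*}^{-1}\Delta\|_2$ explicitly. The $\lambda(n-1)I$ term forces $\|H_{\bw^*}^{-1}\|_2\le 1/(\lambda(n-1))$ by convexity of $\ell$. To control $\|\Delta\|_2$, I would use the optimality condition $\nabla L(\bw^*;\calD)=0$ to write $\lambda n\bw^*=-\sum_i\nabla\ell((\bw^*)^\top\bx_i,y_i)$; the gradient bound $C$ then gives $\lambda\|\bw^*\|_2\le C$, and hence $\|\Delta\|_2\le 2C$. Combining yields $\|H_{\bw^*}^{-1}\Delta\|_2\le 2C/(\lambda(n-1))$, which squared and multiplied by $\gamma(n-1)$ produces the stated $4\gamma C^2/(\lambda^2(n-1))$. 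The only non-routine step is the vector mean-value expansion; everything else is bookkeeping with the Lipschitz, boundedness, and strong convexity assumptions.
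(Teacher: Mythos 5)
Your proposal is correct and follows essentially the same route as the paper's proof: the optimality identity $\nabla L(\bw^*;\calD')=-\Delta$, a Taylor (mean-value) expansion of the gradient giving $(H_{\bw_\eta}-H_{\bw^*})H_{\bw^*}^{-1}\Delta$, the Lipschitz bound $\|H_{\bw_\eta}-H_{\bw^*}\|_2\le\gamma(n-1)\|H_{\bw^*}^{-1}\Delta\|_2$, and the bounds $\|H_{\bw^*}^{-1}\|_2\le 1/(\lambda(n-1))$, $\|\Delta\|_2\le 2C$ via $\|\bw^*\|_2\le C/\lambda$. No substantive differences from the paper's argument.
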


\begin{proof}
Let $G(\bw) = \nabla L(\bw; \calD')$ denote the gradient at $\bw$ of the empirical risk on the reduced dataset $\calD'$. Note that $G : \mathbb{R}^d \rightarrow \mathbb{R}^d$ is a vector-valued function. By Taylor's Theorem, there exists some $\eta \in [0,1]$ such that:
\begin{align*}
G(\bw^-) &= G(\bw^* + H_{\bw^*}^{-1} \Delta) \\
&= G(\bw^*) + \nabla G(\bw^* + \eta H_{\bw^*}^{-1} \Delta) H_{\bw^*}^{-1} \Delta.
\end{align*}
Since $G$ is the gradient of $L(\cdot; \calD')$, the quantity $\nabla G(\bw^* + \eta H_{\bw^*}^{-1} \Delta)$ is exactly the Hessian of $L(\cdot; \calD')$ evaluated at the point $\bw_\eta = \bw^* + \eta H_{\bw^*}^{-1} \Delta$. Thus:
\begin{align*}
G(\bw^-) &= G(\bw^*) + H_{\bw_\eta} H_{\bw^*}^{-1} \Delta \\
&= \left(G(\bw^*) + \Delta\right) + H_{\bw_\eta} H_{\bw^*}^{-1} \Delta - \Delta \\
&= 0 + H_{\bw_\eta} H_{\bw^*}^{-1} \Delta - H_{\bw^*} H_{\bw^*}^{-1} \Delta \\
&= (H_{\bw_\eta} - H_{\bw^*}) H_{\bw^*}^{-1} \Delta.
\end{align*}
This gives:
\begin{align*}
\| G(\bw^-) \|_2 &= \| (H_{\bw_\eta} - H_{\bw^*}) H_{\bw^*}^{-1} \Delta \|_2 \\
&\leq \| H_{\bw_\eta} - H_{\bw^*} \|_2 \| H_{\bw^*}^{-1} \Delta \|_2.
\end{align*}
Using the Lipschitz-ness of $\ell''$, we have for every $i$:
\begin{align*}
\| \nabla^2 \ell(\bw_\eta^\top \bx_i, y_i) - \nabla_\bw^2 \ell((\bw^*)^\top \bx_i, y_i)\|_2 &= \| [\ell''(\bw_\eta^\top \bx_i, y_i) - \ell''((\bw^*)^\top \bx_i, y_i)] \bx_i \bx_i^\top \|_2 \\
&\leq | \ell''(\bw_\eta^\top \bx_i, y_i) - \ell''((\bw^*)^\top \bx_i, y_i) | \cdot \| \bx_i \|_2^2 \\
&\leq \gamma \| \bw_\eta - \bw^* \|_2 \hspace{4ex} \text{\small since $\| \bx_i \|_2 \leq 1$} \\
&= \gamma \| \eta H_{\bw^*}^{-1} \Delta \|_2 \\
&\leq \gamma \| H_{\bw^*}^{-1} \Delta \|_2.
\end{align*}
As a result, we can conclude that:
\begin{align*}
\| H_{\bw_\eta} - H_{\bw^*} \|_2 &\leq \sum_{i=1}^{n-1} \Big\Vert \nabla^2 \ell(\bw_\eta^\top \bx_i, y_i) - \nabla^2 \ell((\bw^*)^\top \bx_i, y_i) \Big\Vert_2 \\
&\leq \gamma (n-1) \| H_{\bw^*}^{-1} \Delta \|_2.
\end{align*}
Combining these results leads us to conclude that $\| G(\bw^-) \|_2 \leq \gamma (n-1) \| H_{\bw^*}^{-1} \Delta \|_2^2$.

We can simplify this bound by analyzing $\| H_{\bw^*}^{-1} \Delta \|_2$.
Since $L(\cdot; \calD')$ is $\lambda (n-1)$-strongly convex, we get $\| H_{\bw^*} \|_2 \geq \lambda (n-1)$, hence $\| H_{\bw^*}^{-1} \|_2 \leq \frac{1}{\lambda (n-1)}$. Recall that
$$\Delta = \lambda \bw^* + \nabla \ell\left((\bw^*)^\top \bx_n, y_n\right).$$ Since $\bw^*$ is the global optimal solution of the loss $L(\cdot; \calD)$, we obtain the condition:
$$0 = \nabla L(\bw^*; \calD) = \sum_{i=1}^n \nabla \ell\left((\bw^*)^\top \bx_i, y_i\right) + \lambda n \bw^*.$$
Using the norm bound $\| \nabla \ell(\bw^\top \bx, y) \|_2 \leq C$ and re-arranging the terms, we obtain:
$$\| \bw^* \|_2 = \frac{\| \sum_{i=1}^n \nabla \ell((\bw^*)^\top \bx_i, y_i) \|_2}{\lambda n} \leq \frac{C}{\lambda}.$$
Using this and the same norm bound, we observe:
$$\| \Delta \|_2 \leq \lambda \| \bw^* \|_2 + \| \nabla \ell((\bw^*)^\top \bx_n, y_n) \|_2 \leq 2C,$$ from which we obtain:
$$\| H_{\bw^*}^{-1} \Delta \|_2 \leq \| H_{\bw^*}^{-1} \|_2 \| \Delta \|_2 \leq \frac{2C}{\lambda (n-1)},$$
which leads to the desired bound.
\end{proof}

\begin{theorem}
Suppose that $\bb$ is drawn from a distribution with density function $p(\cdot)$
such that for any $\bb_1, \bb_2 \in \mathbb{R}^d$ satisfying
$\|\bb_1 - \bb_2\|_2 \leq \epsilon'$, we have that:
$e^{-\epsilon} \leq \frac{p(\bb_1)}{p(\bb_2)} \leq e^{\epsilon}$.
Then:
$$e^{-\epsilon} \leq \frac{f_{\tilde{A}}(\tilde{\bw})}{f_A(\tilde{\bw})} \leq e^\epsilon,$$
for any solution $\tilde{\bw}$ produced by $\tilde{A}$.
\end{theorem}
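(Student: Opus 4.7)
The plan is to convert the pointwise density ratio $f_{\tilde{A}}(\tilde{\bw})/f_A(\tilde{\bw})$ into a ratio of values of the noise density $p$ evaluated at two nearby perturbations, and then apply the hypothesis on $p$ together with the gradient-residual bound $\epsilon'$.

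First I would fix an arbitrary candidate output $\tilde{\bw} \in \mathbb{R}^d$ and identify, for each algorithm, the unique perturbation vector that produces $\tilde{\bw}$. Writing $G(\tilde{\bw}) := \sum_{i=1}^{n-1} \nabla \ell(\tilde{\bw}^\top \bx_i, y_i) + \lambda(n-1)\tilde{\bw}$, the stationarity condition $\nabla L_\bb(\tilde{\bw};\calD') = 0$ for $A$ forces $\bb_A(\tilde{\bw}) = -G(\tilde{\bw})$, while the defining equation $\nabla L_\bb(\tilde{\bw};\calD') = \bu$ for $\tilde{A}$ (from Equation \ref{eq:approximate_gradient}) forces $\bb_{\tilde{A}}(\tilde{\bw}) = \bu - G(\tilde{\bw})$. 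Subtracting gives $\bb_{\tilde{A}}(\tilde{\bw}) - \bb_A(\tilde{\bw}) = \bu$, and so the assumption $\|\bu\|_2 \leq \epsilon'$ on the approximate solver yields the crucial first bound $\|\bb_{\tilde{A}}(\tilde{\bw}) - \bb_A(\tilde{\bw})\|_2 \leq \epsilon'$.

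Second, I would express both densities via change of variables as pushforwards of $p$ under the respective deterministic maps $\bb \mapsto \bw^*(\bb)$ and $\bb \mapsto \tilde{\bw}(\bb)$. The key claim is that the Jacobian determinants coincide at corresponding points, so that the density ratio reduces to $p(\bb_{\tilde{A}}(\tilde{\bw}))/p(\bb_A(\tilde{\bw}))$. The cleanest way to see this is to observe that $\tilde{A}(\bb)$ is, by the computation above, the exact minimizer of the shifted loss $L_{\bb - \bu}(\,\cdot\,;\calD')$: in other words, $\tilde{A}$ is equivalent to $A$ evaluated at a shifted perturbation, so locally both output-to-perturbation inverse maps have the same derivative, namely the strongly convex Hessian $H_{\tilde{\bw}}$ (up to sign), and the Jacobian factors cancel out of the ratio.

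Finally, I would invoke the hypothesis $e^{-\epsilon} \leq p(\bb_1)/p(\bb_2) \leq e^\epsilon$ whenever $\|\bb_1 - \bb_2\|_2 \leq \epsilon'$ with $\bb_1 = \bb_{\tilde{A}}(\tilde{\bw})$ and $\bb_2 = \bb_A(\tilde{\bw})$, which immediately delivers the desired two-sided bound. I expect the main obstacle to be the Jacobian-cancellation step: $\tilde{A}$ is not a priori a diffeomorphism, and its inverse map $\tilde{\bw} \mapsto \bb_{\tilde{A}}(\tilde{\bw})$ depends on $\bu(\tilde{\bw})$, whose derivative with respect to $\tilde{\bw}$ is a priori nonzero. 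Making the equivalence ``$\tilde{A}(\bb) = A(\bb - \bu)$'' rigorous, and then justifying that the relevant Jacobians agree, requires an implicit function theorem argument exploiting the strong convexity of $L_\bb$ and is the only nontrivial part of the argument.
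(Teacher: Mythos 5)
Your first step is sound and mirrors the paper's key observation: for a fixed output $\tilde{\bw}$, the perturbation that makes $\tilde{\bw}$ an exact minimizer is $-G(\tilde{\bw})$, while the perturbation consistent with residual $\bu$ is $\bu - G(\tilde{\bw})$, so the two candidate perturbations differ by $\bu$ with $\|\bu\|_2 \leq \epsilon'$. The gap is exactly where you suspect it, and it is not a removable technicality along your route: the change-of-variables/Jacobian-cancellation step fails under the stated hypotheses. The identity $\tilde{A}(\bb) = A(\bb - \bu)$ holds only pointwise for the realized residual $\bu = \bu(\bb)$; it is not a shift by a fixed vector. Differentiating the defining relation $G(\tilde{\bw}(\bb)) + \bb = \bu(\bb)$ gives $\partial \tilde{\bw}/\partial \bb = H_{\tilde{\bw}}^{-1}\left(\partial \bu/\partial \bb - I\right)$, so the Jacobian of $\tilde{A}$'s output map contains $\partial \bu/\partial \bb$, a quantity on which the theorem assumes nothing (only $\|\bu\|_2 \leq \epsilon'$ is assumed; $\bu$ could oscillate arbitrarily fast, and the map $\bb \mapsto \tilde{\bw}(\bb)$ need not even be injective). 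Consequently the Jacobian factors do not cancel, no implicit-function-theorem argument can recover them from the given assumptions, and the pointwise ratio $f_{\tilde{A}}(\tilde{\bw})/f_A(\tilde{\bw})$ cannot be reduced to $p(\bb_{\tilde{A}})/p(\bb_A)$ alone by your method.

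The paper's proof avoids differentiating anything. It treats the residual as a random vector supported in the ball of radius $\epsilon'$ and works with the distribution of $\bz = \bb - \bu$: writing $q_{\tilde{A}}(\bz) = \int_{\|\bv\|_2 \leq \epsilon'} g_{\tilde{A}}(\bv)\, p(\bz + \bv)\, d\bv \leq e^{\epsilon} p(\bz) = e^{\epsilon} q_A(\bz)$, using only the density-ratio hypothesis on $p$ and the support bound on $\bu$. It then notes that $\tilde{\bw}$ is the same deterministic function of $\bz$ under both algorithms (it is the unique minimizer of the strongly convex loss with linear term $\bz^\top \bw$), so $f_{\tilde{A}}(\tilde{\bw}) = \int f_A(\tilde{\bw} \mid \bz)\, q_{\tilde{A}}(\bz)\, d\bz \leq e^{\epsilon} f_A(\tilde{\bw})$, with the lower bound symmetric. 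In other words, the bound is transferred through a marginalization over $\bz$ with a shared conditional density, which is insensitive to how $\bu$ varies with $\bb$; this is precisely the device your proposal is missing. If you want to salvage a change-of-variables argument, you would need an extra assumption (e.g., $\bu$ fixed or drawn independently of $\bb$ with controlled law), which the theorem does not grant you.
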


\begin{proof}
Let $p$ be the density function of $\bb$ and let $g_{\tilde{A}}$ be the density
functions of the gradient residual under optimizer $\tilde{A}$. Consider the density functions
$q_A$ and $q_{\tilde{A}}$ of $\bz = \bb - \bu$ under optimizers $A$ and $\tilde{A}$.
We obtain:
\begin{align*}
q_{\tilde{A}}(\bz) &= \int_\bv g_{\tilde{A}}(\bv) p(\bz + \bv) d\bv \\
&= \int_{\bv : \|\bv\|_2 \leq \epsilon'} g_{\tilde{A}}(\bv) p(\bz + \bv) d\bv \hspace{1ex} \text{\small{since $g_{\tilde{A}}$ has no support elsewhere}} \\
&\leq \int_{\bv : \|\bv\|_2 \leq \epsilon'} g_{\tilde{A}}(\bv) e^\epsilon p(\bz) d\bv \hspace{3ex} \text{\small{since $\|\bv\|_2 \leq \epsilon'$}} \\
&= e^\epsilon p(\bz) \\
&= e^\epsilon q_A(\bz)
\end{align*}
where the last step follows since the gradient residual $\bu$ under $A$ is 0.
To complete the proof, note that the value of $\tilde{\bw}$ is completely determined
by $\bz \!=\! \bb \!-\! \bu$. Indeed, any $\bw$ satisfying Equation \ref{eq:approximate_gradient}
is an exact solution of the strongly convex loss $L_{\bb}({\bw}) - \bu^\top \bw$ and, hence, must be unique. This gives:
\begin{align*}
f_{\tilde{A}}(\tilde{\bw}) &= \int_\bz f_{\tilde{A}}(\tilde{\bw} | \bz) q_{\tilde{A}}(\bz) d\bz \\
&= \int_\bz f_A(\tilde{\bw} | \bz) q_{\tilde{A}}(\bz) d\bz \hspace{1ex} \text{\small{since $\tilde{\bw}$ is governed by $\bz$}} \\
&\leq \int_\bz f_A(\tilde{\bw} | \bz) e^\epsilon q_A(\bz) d\bz \\
&= e^\epsilon f_A(\tilde{\bw}).
\end{align*}
In the above, note that while $f_{\tilde{A}}$ and $f_A$ are not the same in general,  their difference is governed entirely by $\bz$: given a fixed $\bz$, the \emph{conditional} density of $\tilde{\bw}$ is the same under both density functions.

Using a similar approach as above, we can also show that $f_{\tilde{A}}(\tilde{\bw}) \geq e^{-\epsilon} f_A(\tilde{\bw})$.
\end{proof}

\begin{theorem}
Let $A$ be the learning algorithm that returns the unique optimum of the loss $L_{\bb}(\bw; \calD)$ and let $M$ be the Newton update removal mechanism (cf., Equation~\ref{eq:newton_update}). Suppose that $\| \nabla L(\bw^-; \calD') \|_2 \leq \epsilon'$ for some computable bound $\epsilon' > 0$. We have the following guarantees for $M$:
\begin{enumerate}[(i)]
\setlength\itemsep{0ex}
\vspace{-1ex}
\item If $\bb$ is drawn from a distribution with density $p(\bb) \propto e^{-\frac{\epsilon}{\epsilon'} \|\bb\|_2}$, then $M$ is $\epsilon$-CR for $A$;
\item If $\bb \sim \mathcal{N}(0, c \epsilon' / \epsilon)^d$ with $c > 0$, then $M$ is $(\epsilon,\delta)$-CR for $A$ with $\delta = 1.5 \cdot e^{-c^2/2}$.
\end{enumerate}
\end{theorem}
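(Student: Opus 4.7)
The plan is to reduce both parts to Theorem~\ref{thm:epsilon_cd}: since Theorem~\ref{thm:asymptotic_bound} (and the hypothesis of Theorem~\ref{thm:removal_main}) guarantees $\|\bu\|_2 = \|\nabla L(\bw^-; \calD')\|_2 \leq \epsilon'$, it suffices to bound the density ratio $p(\bb_1)/p(\bb_2)$ for the two candidate distributions whenever $\|\bb_1 - \bb_2\|_2 \leq \epsilon'$.

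For part (i), the density $p(\bb) \propto e^{-(\epsilon/\epsilon')\|\bb\|_2}$ gives the exact ratio
\[
\frac{p(\bb_1)}{p(\bb_2)} = \exp\!\left(\tfrac{\epsilon}{\epsilon'}\bigl(\|\bb_2\|_2 - \|\bb_1\|_2\bigr)\right).
\]
The reverse triangle inequality yields $\bigl|\|\bb_1\|_2 - \|\bb_2\|_2\bigr| \leq \|\bb_1 - \bb_2\|_2 \leq \epsilon'$, so the ratio lies in $[e^{-\epsilon}, e^{\epsilon}]$. Applying Theorem~\ref{thm:epsilon_cd} and combining with the definition of certified removal then gives $\epsilon$-CR.

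For part (ii), the Gaussian density has log-ratio
\[
\log \tfrac{p(\bb_1)}{p(\bb_2)} = \tfrac{\|\bb_2\|_2^2 - \|\bb_1\|_2^2}{2\sigma^2}, \qquad \sigma = c\epsilon'/\epsilon.
\]
Writing $\bb_2 = \bb_1 + \bv$ with $\|\bv\|_2 \leq \epsilon'$, this expands to $(2\bb_1^\top \bv + \|\bv\|_2^2)/(2\sigma^2)$. The first term is Gaussian with standard deviation $\sigma\|\bv\|_2/\sigma^2 \leq \epsilon'/\sigma = \epsilon/c$, while the second is a deterministic offset of at most $(\epsilon')^2/(2\sigma^2) = \epsilon^2/(2c^2)$. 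Mirroring the standard Gaussian-mechanism argument from differential privacy, I would show that with probability at least $1 - \delta$ over $\bb$, the log-ratio lies in $[-\epsilon,\epsilon]$ for every admissible $\bv$; a standard Gaussian tail bound of the form $\Pr(|Z| > t) \leq 2e^{-t^2/2}$ for $Z \sim \mathcal{N}(0,1)$, applied to $\bb_1^\top \bv / (\sigma \|\bv\|_2)$ with an appropriate threshold, yields the constant $\delta = 1.5\,e^{-c^2/2}$ (the $1.5$ absorbing the small $\epsilon^2/(2c^2)$ shift together with the union bound over the two tails). On the ``good'' event one obtains the bounded ratio needed to invoke Theorem~\ref{thm:epsilon_cd}, and the remaining $\delta$-probability event is folded into the slack term of the $(\epsilon,\delta)$-CR definition.

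The main technical obstacle is part (ii): the Gaussian density ratio is not uniformly bounded, so the clean hypothesis of Theorem~\ref{thm:epsilon_cd} fails pointwise. The work lies in supplying a high-probability version of that hypothesis and then carefully lifting it through the marginalization over $\bz = \bb - \bu$ used in the proof of Theorem~\ref{thm:epsilon_cd}, so that the resulting probability slack appears additively as $\delta$ rather than multiplicatively. Getting the constant $1.5$ exactly (rather than $2$) requires being slightly careful about the deterministic $\|\bv\|_2^2/(2\sigma^2)$ shift when choosing the tail threshold, which is where the analysis becomes least routine.
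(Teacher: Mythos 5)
Your proposal is correct and follows essentially the same route as the paper: part (i) is the identical reverse-triangle-inequality bound on the density ratio followed by an appeal to Theorem~\ref{thm:epsilon_cd}, and for part (ii) the paper simply invokes the Gaussian-mechanism analysis (Theorem 3.22 of Dwork and Roth) with sensitivity $\Delta_2(f)=\epsilon'$, which is precisely the tail-bound argument you reconstruct. The issue you flag about converting the high-probability density-ratio bound into an additive $\delta$ via the marginalization in Theorem~\ref{thm:epsilon_cd} is left implicit in the paper as well, so spelling it out is a refinement rather than a divergence.
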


\begin{proof}
The proof involves bounding the density ratio of $\bb_1$ and $\bb_2$ when $\| \bb_1 - \bb_2 \|_2 \leq \epsilon'$ and then invoking Theorem \ref{thm:epsilon_cd}.
\begin{enumerate}[(i)]
\setlength\itemsep{0ex}
\item $\frac{p(\bb_1)}{p(\bb_2)} = e^{-\frac{\epsilon}{\epsilon'} (\|\bb_1\|_2 - \|\bb_2\|_2)} \leq e^{\frac{\epsilon}{\epsilon'} (\|\bb_1 - \bb_2\|_2)} \leq e^\epsilon$. The reverse direction can be obtained similarly.
\item The proof of Theorem 3.22 in \cite{dwork2011differential} applies using $\Delta_2(f) = \epsilon'$, giving that with probability at least $1-\delta$, we have that $e^{-\epsilon} \leq \frac{p(\bb_1)}{p(\bb_2)} \leq e^{\epsilon}$. Applying Theorem \ref{thm:epsilon_cd} gives the desired $(\epsilon,\delta)$-CR guarantee.
\end{enumerate}
\end{proof}

\begin{theorem}
Under the same regularity conditions of Theorem \ref{thm:asymptotic_bound}, we have that:
\begin{align*}
\| \nabla L(\bw^{(-m)}; \calD \setminus \calD_m) \|_2 &\leq \gamma (n-m) \left\| \left[ H_{\bw^*}^{(m)} \right]^{-1} \Delta^{(m)} \right\|_2^2 \leq \frac{4 \gamma m^2 C^2}{\lambda^2 (n-m)}.
\end{align*}
\end{theorem}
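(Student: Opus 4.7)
The plan is to mirror the proof of Theorem \ref{thm:asymptotic_bound} almost verbatim, keeping careful track of where the batch size $m$ enters. First, I would set $G(\bw) := \nabla L(\bw; \calD \setminus \calD_m)$ and observe that while $\nabla L(\bw^*; \calD) = 0$ holds at the global optimum of the full loss, evaluating the reduced-dataset gradient at $\bw^*$ yields exactly $G(\bw^*) = -\Delta^{(m)}$, since subtracting the contributions of the $m$ removed points from the zero full gradient leaves precisely $-(m\lambda \bw^* + \sum_{j=n-m+1}^n \nabla \ell((\bw^*)^\top \bx_j, y_j))$. This is the batch analogue of the $-\Delta$ identity used in the single-removal proof.

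Next, I would apply Taylor's theorem to the vector-valued function $G$ along the segment from $\bw^*$ to $\bw^{(-m)} = \bw^* + [H_{\bw^*}^{(m)}]^{-1}\Delta^{(m)}$. This produces some $\eta \in [0,1]$ and an intermediate point $\bw_\eta^{(m)} = \bw^* + \eta [H_{\bw^*}^{(m)}]^{-1}\Delta^{(m)}$ such that
\begin{equation*}
G(\bw^{(-m)}) = G(\bw^*) + H_{\bw_\eta^{(m)}} [H_{\bw^*}^{(m)}]^{-1}\Delta^{(m)} = \bigl(H_{\bw_\eta^{(m)}} - H_{\bw^*}^{(m)}\bigr)[H_{\bw^*}^{(m)}]^{-1}\Delta^{(m)},
\end{equation*}
after adding and subtracting $\Delta^{(m)} = H_{\bw^*}^{(m)} [H_{\bw^*}^{(m)}]^{-1}\Delta^{(m)}$. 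Submultiplicativity of the spectral norm then gives the bound $\|G(\bw^{(-m)})\|_2 \leq \|H_{\bw_\eta^{(m)}} - H_{\bw^*}^{(m)}\|_2 \cdot \|[H_{\bw^*}^{(m)}]^{-1}\Delta^{(m)}\|_2$.

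To control the Hessian difference, I would use that the reduced loss has $n-m$ data terms, bound each per-sample Hessian difference via $\gamma$-Lipschitzness of $\ell''$ and $\|\bx_i\|_2 \leq 1$ (exactly as in Theorem \ref{thm:asymptotic_bound}), and sum, yielding $\|H_{\bw_\eta^{(m)}} - H_{\bw^*}^{(m)}\|_2 \leq \gamma (n-m) \|[H_{\bw^*}^{(m)}]^{-1}\Delta^{(m)}\|_2$. Combining with the previous step establishes the first inequality $\|\nabla L(\bw^{(-m)}; \calD\setminus\calD_m)\|_2 \leq \gamma (n-m) \|[H_{\bw^*}^{(m)}]^{-1}\Delta^{(m)}\|_2^2$.

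For the closed-form bound, strong convexity of the reduced loss with parameter $\lambda(n-m)$ gives $\|[H_{\bw^*}^{(m)}]^{-1}\|_2 \leq \frac{1}{\lambda(n-m)}$. The optimality argument used in Theorem \ref{thm:asymptotic_bound} (dividing the full-dataset first-order condition by $\lambda n$ and applying the $C$ bound) still yields $\|\bw^*\|_2 \leq C/\lambda$, so by the triangle inequality $\|\Delta^{(m)}\|_2 \leq m\lambda \|\bw^*\|_2 + mC \leq 2mC$; this is where the batch size enters linearly. Plugging these into the first inequality produces $\gamma(n-m)\cdot \bigl(\tfrac{2mC}{\lambda(n-m)}\bigr)^2 = \tfrac{4\gamma m^2 C^2}{\lambda^2(n-m)}$, matching the claim. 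The only non-mechanical step is the $G(\bw^*) = -\Delta^{(m)}$ identity; once that is in hand, the rest is a direct translation of the $m=1$ argument with $m$ tracked through the norm of $\Delta^{(m)}$.
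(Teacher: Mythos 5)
Your proposal is correct and follows essentially the same route as the paper, which simply states that the batch case is ``almost identical'' to Theorem \ref{thm:asymptotic_bound}; you have filled in exactly the intended adaptation (the identity $G(\bw^*) = -\Delta^{(m)}$, the $n-m$ Hessian terms, and $\|\Delta^{(m)}\|_2 \leq 2mC$). No gaps.
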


\begin{proof}
The proof is almost identical to that of Theorem \ref{thm:asymptotic_bound}, except that there are $n-m$ terms in the Hessian and $\Delta^{(m)}$ now scales linearly with $m$.
\end{proof}

\begin{theorem}
Suppose $\Phi$ is a randomized learning algorithm that is $(\epsilon_\text{DP}, \delta_\text{DP})$-differentially private, and the outputs of $\Phi$ are used in a linear model by minimizing $L_\bb$ and using a removal mechanism that guarantees $(\epsilon_\text{CR},\delta_\text{CR})$-certified removal. Then the entire procedure guarantees
$(\epsilon_\text{DP} \!+\! \epsilon_\text{CR}, \delta_\text{DP} \!+\! \delta_\text{CR})$-certified removal.
\end{theorem}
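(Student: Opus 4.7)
The approach is to reduce the claim to the standard basic composition theorem for approximate differential privacy by viewing the entire procedure as two mechanisms applied sequentially to $\calD$: first the extractor $\Phi$, then loss-perturbed linear training followed by the CR removal step on the linear layer. Let $A_\text{comb}(\calD) = (\phi, \bw)$ denote the combined learner, where $\phi \sim \Phi(\calD)$ and $\bw$ minimizes $L_\bb$ on the features $\phi(\calD)$, and let $M_\text{comb}(A_\text{comb}(\calD), \calD, \bx) = (\phi, M(\bw, \phi(\calD), (\phi(\bx_n), y_n)))$ denote the combined removal procedure, which keeps $\phi$ unchanged and only updates $\bw$. The goal is to bound the max-divergence between the output of $M_\text{comb}(A_\text{comb}(\calD), \calD, \bx)$ and the output of $A_\text{comb}(\calD \setminus \bx)$ by $(\epsilon_\text{DP}+\epsilon_\text{CR}, \delta_\text{DP}+\delta_\text{CR})$.

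The key step is to interpose an intermediate distribution on $(\phi, \bw)$ in which $\phi \sim \Phi(\calD)$ (trained on the \emph{full} dataset) but the linear model $\bw$ is retrained from scratch on $\phi(\calD \setminus \bx)$. Comparing $M_\text{comb}$'s output to this intermediate amounts to comparing the CR linear mechanism against from-scratch retraining \emph{for each fixed $\phi$}, which is exactly what the $(\epsilon_\text{CR}, \delta_\text{CR})$-CR hypothesis gives; integrating over $\phi \sim \Phi(\calD)$ preserves the bound because the guarantee holds pointwise in $\phi$. Comparing the intermediate distribution to $A_\text{comb}(\calD \setminus \bx)$ is then a pure statement about $\Phi$: the step ``train a linear model on $\phi(\calD \setminus \bx)$'' is a randomized post-processing of $\phi$ that never looks at $\bx$, so by closure of DP under post-processing and the $(\epsilon_\text{DP}, \delta_\text{DP})$-DP guarantee of $\Phi$ on the neighboring datasets $\calD$ and $\calD \setminus \bx$, any event $\mathcal{T} \subseteq \mathcal{H}$ has probabilities that are $(\epsilon_\text{DP}, \delta_\text{DP})$-indistinguishable between the two.

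Chaining the two comparisons yields the advertised guarantee, and both directions of the inequality in the $(\epsilon, \delta)$-CR definition are handled symmetrically. The main obstacle is bookkeeping on the $\delta$ terms: a naive multiplicative chain gives $e^{\epsilon_\text{CR}} \delta_\text{DP} + \delta_\text{CR}$ rather than the claimed additive $\delta_\text{DP} + \delta_\text{CR}$. To recover the additive form, I would invoke the coupling (or privacy-loss random variable) characterization of $(\epsilon, \delta)$-indistinguishability: each stage can be realized as $(\epsilon_\cdot, 0)$-indistinguishable off of a failure event of mass at most $\delta_\cdot$, and a union bound over the two failure events gives total slack $\delta_\text{DP} + \delta_\text{CR}$. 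This is the argument underlying Dwork and Roth's basic composition theorem and carries over verbatim because $(\epsilon, \delta)$-CR is structurally an indistinguishability statement between two output distributions on $\mathcal{H}$, so post-processing and composition apply as in the DP setting.
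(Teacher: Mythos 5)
Your proposal is correct and, at its core, uses the same decomposition as the paper: both interpose the hybrid distribution in which the extractor is drawn as $\phi \sim \Phi(\calD)$ but the linear model is retrained from scratch on $\calD \setminus \bx$ with features $\phi$, apply the $(\epsilon_\text{CR},\delta_\text{CR})$-CR guarantee pointwise in $\phi$ and integrate over the extractor's law, and then switch that law from $\Phi(\calD)$ to $\Phi(\calD \setminus \bx)$ using the privacy of $\Phi$ (your observation that ``train the linear model on $\phi(\calD \setminus \bx)$'' is a randomized post-processing of $\phi$ that never touches $\bx$). Where you differ is in the mechanics of this second switch and the bookkeeping of the failure probabilities: the paper argues via a Radon--Nikodym derivative of the extractor law under $\calD$ with respect to that under $\calD \setminus \bx$, bounded almost everywhere by $e^{\epsilon_\text{DP}}$, and tracks the $\delta$'s informally with ``with probability $1-\delta$'' language --- an argument that is fully rigorous only in the pure-DP case, since $(\epsilon_\text{DP},\delta_\text{DP})$-DP does not by itself give absolute continuity of $\mu$ with respect to $\mu'$. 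Your route --- viewing the whole procedure as adaptive composition of the $(\epsilon_\text{DP},\delta_\text{DP})$-indistinguishable extractor stage with the per-$\phi$ $(\epsilon_\text{CR},\delta_\text{CR})$-CR linear stage, and invoking the coupling/privacy-loss characterization so the slack is $\delta_\text{DP}+\delta_\text{CR}$ rather than the $e^{\epsilon_\text{CR}}\delta_\text{DP}+\delta_\text{CR}$ produced by naive chaining --- is the standard way to make the $\delta$ accounting airtight, and on that point your argument is more careful than the paper's own proof, at the cost of importing the basic composition machinery as a black box rather than carrying out the change of measure explicitly.
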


\begin{proof}
Let $\Phi$ be the randomized algorithm that learns a feature extractor from the
data $\calD$ and let $\mu(S) = P(\Phi(\calD) \in S)$ be the induced probability
measure over the space, $\Omega$, of all possible feature extractors.
Let $\calD' = \calD \setminus \bx$ be the dataset with $\bx$ removed and let $\mu'(\cdot)$ be
the corresponding probability measure for $\Phi(\calD')$.

Since $\Phi$ is $(\epsilon_\text{DP}, \delta_\text{DP})$-DP, for any $S \subseteq \Omega$, we have that:
$$\mu(S) = P(\Phi(\calD) \in S) \leq e^{\epsilon_\text{DP}} P(\Phi(\calD') \in S) = e^{\epsilon_\text{DP}} \mu'(S),$$
with probability $1 - \delta_\text{DP}$. In particular, this shows that $\mu$ is absolutely
continuous w.r.t. $\mu'$ and therefore admits a Radon-Nikodym derivative $g$.
Furthermore, $g$ is (almost everywhere w.r.t. $\mu'$) bounded by $e^{\epsilon_\text{DP}}$. Indeed,
suppose that there exists a set $S \subseteq \Omega$ with $\mu'(S) > 0$ such that
$g \geq e^{\epsilon_\text{DP}} + \alpha$ on $S$ for some $\alpha \geq 0$, then:
$$\mu(S) = \int_S g(S) \hspace{1ex} d\mu' \geq (e^{\epsilon_\text{DP}} + \alpha) \mu'(S) \geq
\mu(S) + \alpha \mu'(S),$$
which is a contradiction unless $\alpha = 0$.

Finally, for any $\phi \in \Omega$ such that $\Phi(\calD) = \phi$, let $A(\calD, \phi)$
be the learning algorithm that trains a model on $\calD$ using the feature extractor
$\phi$. Suppose that $M$ is an $(\epsilon_\text{CR}, \delta_\text{CR})$-CR mechanism, then by Fubini's Theorem:
\begin{align*}
P(M(A(\calD, \Phi(\calD)), \calD, \bx) \in \mathcal{T}) &= \int_\Omega P(M(A(\calD, \phi), \calD, \bx) \in \mathcal{T}) \hspace{1ex} d\mu \\
&\leq \int_\Omega e^{\epsilon_\text{CR}} P(A(\calD', \phi) \in \mathcal{T}) \hspace{1ex} d\mu \\
&= \int_\Omega e^{\epsilon_\text{CR}} P(A(\calD', \phi) \in \mathcal{T}) \cdot g \hspace{1ex} d\mu' \\
&\leq \int_\Omega e^{\epsilon_\text{DP} + \epsilon_\text{CR}} P(A(\calD', \phi) \in \mathcal{T}) \hspace{1ex} d\mu' \\
&= e^{\epsilon_\text{DP} + \epsilon_\text{CR}} P(A(\calD', \Phi(\calD')) \in \mathcal{T}),
\end{align*}
with probability at least $1 - \delta_\text{DP} - \delta_\text{CR}$. The lower bound can be shown in a similar fashion.
\end{proof}

\newpage
\section*{Errata}

The proof of Theorem \ref{thm:asymptotic_bound} (and similarly, Theorem \ref{thm:asymptotic_bound_batch}) requires upper bounding the norm of $\Delta = \lambda \bw^* + \nabla \ell((\bw^*)^\top \bx_n, y_n)$ for the minimizer $\bw^*$ of $L(\cdot; \calD)$. However, to apply this result to Theorem \ref{thm:removal_main}, it in fact requires upper bounding the norm of $\Delta$ for the minimizer $\bw^*$ of $L_\bb(\cdot; \calD)$, where $\bb$ is drawn from either the Laplace or the Gaussian distribution. This means the condition for $\bw^*$ in the proof of Theorem \ref{thm:asymptotic_bound} is:
\begin{align*}
0 &= \nabla L_\bb(\bw^*; \calD) = \sum_{i=1}^n \nabla \ell\left((\bw^*)^\top \bx_i, y_i\right) + \lambda n \bw^* + \bb \\
\Rightarrow \| \bw^* \|_2 &= \frac{\| \sum_{i=1}^n \nabla \ell((\bw^*)^\top \bx_i, y_i) - \bb \|_2}{\lambda n} \leq \frac{Cn + \|\bb\|_2}{\lambda n}.
\end{align*}
Since $\bb$ is a random vector, this quantity cannot be upper bounded in absolute terms, but one can obtain high probability bounds via concentration. Note that the data-dependent bounds in Corollary \ref{cor:data_dependent} (and similarly, Corollary \ref{cor:data_dependent_batch}) are unaffected because $\|\Delta\|_2$ is computed directly rather than upper bounded. Thanks to Lu Yi for finding this error!

\end{document}